\documentclass{article}

\usepackage{microtype}
\usepackage{graphicx}
\usepackage{subcaption}
\usepackage{booktabs} % for professional tables

\usepackage{hyperref}

\usepackage[ruled,vlined]{algorithm2e}

\usepackage[accepted]{icml2026}

\usepackage{amsmath}
\usepackage{amssymb}
\usepackage{mathtools}
\usepackage{amsthm}
\newtheorem{problem}{Problem}
\usepackage{multirow}

\usepackage[capitalize,noabbrev]{cleveref}

\theoremstyle{plain}
\newtheorem{theorem}{Theorem}[section]

\theoremstyle{definition}

\theoremstyle{remark}

\usepackage[textsize=tiny]{todonotes}

\icmltitlerunning{Rethinking Loss Reweighting for Imbalance Learning as an Inverse Problem: A Neural Collapse Point of View}

\begin{document}

\twocolumn[
  \icmltitle{Rethinking Loss Reweighting for Imbalance Learning as an Inverse Problem: A Neural Collapse Point of View}

  % It is OKAY to include author information, even for blind submissions: the
  % style file will automatically remove it for you unless you've provided
  % the [accepted] option to the icml2026 package.

  % List of affiliations: The first argument should be a (short) identifier you
  % will use later to specify author affiliations Academic affiliations
  % should list Department, University, City, Region, Country Industry
  % affiliations should list Company, City, Region, Country

  % You can specify symbols, otherwise they are numbered in order. Ideally, you
  % should not use this facility. Affiliations will be numbered in order of
  % appearance and this is the preferred way.
  \icmlsetsymbol{equal}{*}

   \begin{icmlauthorlist}
    \icmlauthor{Jinping Wang}{yyy,equal}
    \icmlauthor{Zixin Tong}{yyy,equal}
    \icmlauthor{Zhiwu Xie}{yyy}
    \icmlauthor{Zhiqiang Gao}{yyy,ccc}
    %\icmlauthor{}{sch}

    %\icmlauthor{}{sch}
    %\icmlauthor{}{sch}
  \end{icmlauthorlist}

  \icmlaffiliation{yyy}{CSMT, Wenzhou-Kean University}
  \icmlaffiliation{ccc}{International Frontier Interdisciplinary Research Institute, Wenzhou-Kean University}
  \icmlcorrespondingauthor{Jinping Wang}{1306325@wku.edu.cn} 
    \icmlcorrespondingauthor{Zixin Tong}{1338028@wku.edu.cn} 
  \icmlcorrespondingauthor{Zhiqiang Gao}{zgao@wku.edu.cn} 

  % You may provide any keywords that you find helpful for describing your
  % paper; these are used to populate the "keywords" metadata in the PDF but
  % will not be shown in the document
  \icmlkeywords{Machine Learning, ICML}

  \vskip 0.3in
]

% this must go after the closing bracket ] following \twocolumn[ ...

% This command actually creates the footnote in the first column listing the
% affiliations and the copyright notice. The command takes one argument, which
% is text to display at the start of the footnote. The \icmlEqualContribution
% command is standard text for equal contribution. Remove it (just {}) if you
% do not need this facility.

% Use ONE of the following lines. DO NOT remove the command.
% If you have no special notice, KEEP empty braces:
 % no special notice (required even if empty)
% Or, if applicable, use the standard equal contribution text:
 \printAffiliationsAndNotice{\icmlEqualContribution}

\begin{abstract}
Loss reweighting is a widely used strategy for long-tailed classification, but existing reweighting strategies often rely on heuristics and rarely define a well-specified target. Inspired by Neural Collapse (NC), the ideal simplex Equiangular Tight Frame (ETF) terminal geometry suggests equal per-class average loss as a reasonable target for reweighting.
Based on the ideal equal loss objective, we consider loss reweighting as an inverse problem and propose an inverse-view reweighting strategy that infers class weights dynamically to match this ideal objective. 
Empirically, NC metrics suggest our method can effectively reduce the loss imbalance coefficient and closer alignment with NC geometry while consistently outperforming strong long-tailed baselines on different datasets.
Our code is publicly available at: \href{https://github.com/tongzixin716716/Inverse-Loss-Reweighting}{https://github.com/tongzixin716716/Inverse-Loss-Reweighting}.
\end{abstract}
\section{Introduction}
%Deep Neural Networks have achieved strong performance on many visual recognition tasks, supported by class-balanced large-scale datasets. However, perfectly class-balanced datasets rarely happen in the real world. Most real-world datasets follow a long-tailed distribution, with a few head classes containing abundant labeled samples and most tail classes having only a small number of instances. Unfortunately, standard training pipelines cause the model to over-bias toward the head classes under this long-tailed distribution. Motivated by this challenge, many researchers have conducted studies on long-tailed classification tasks and aim to learn a robust model that can perform well on both head classes (large samples) and tail classes (small samples). Recently, several approaches have been proposed to decrease the impact of long-tailed data distributions.

Deep Neural Networks have achieved huge success on various visual recognition tasks \cite{AlexNet, ResNet32}. These achievements are largely supported by large-scale datasets such as ImageNet \cite{imagenet}, where each class has an equal and balanced number of training samples. However, in real-world scenarios, datasets are rarely perfectly balanced but often follow a long-tailed distribution. In a long-tailed distribution, head classes have a large number of samples and dominate the sample space, while the tail classes only have a few samples, causing the model to over-bias towards the head classes.

Aiming to improve the generalization ability and learn better representations under such imbalanced scenarios, diverse long-tailed methods emerged \cite{CB_loss,logit_adjusment,decoupling_learning}. For example, 
data resampling strategies \cite{over_sampling} mitigate class-imbalance bias through reshaping the sample classes dominating the decision boundary by learning the separation representation from classifier optimization. Contrastive learning strategies are also induced to long-tailed scenarios \cite{KCL}, by pulling positives together and pushing apart negatives, models are able to learn more discriminative and uniformly structured embeddings. 

Among all the imbalance learning strategies, loss reweighting is one of the most widely used paradigms \cite{CB_loss, focal_loss, IB_loss}. By simply adjusting the class-wise weights, loss reweighting strategies enable more balanced training, leading to better generalization performance under imbalance training scenarios. However, current reweighting methods are often target-agnostic: Instead of setting an ideal class-wise loss distribution, reweighting is often designed as a heuristic. Thus, we take a step back and rethink: what ideal objective should reweighting aim for?  

\begin{figure}[t]
  \centering

  \begin{subfigure}[b]{0.48\linewidth}
    \centering
    \includegraphics[width=\linewidth]{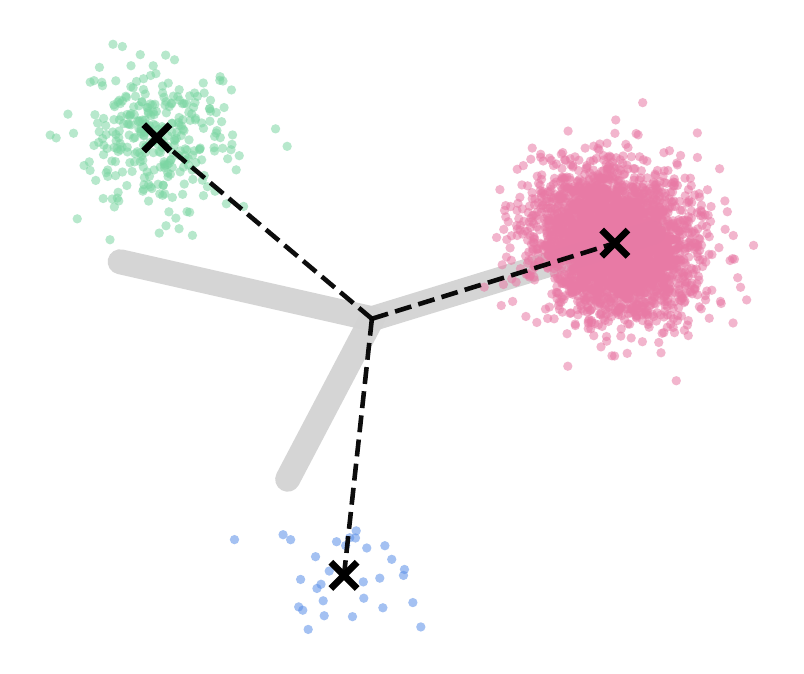} 
    \caption{Baseline}
    \label{fig:baseline}
  \end{subfigure}\hfill
  \begin{subfigure}[b]{0.48\linewidth}
    \centering
    \includegraphics[width=\linewidth]{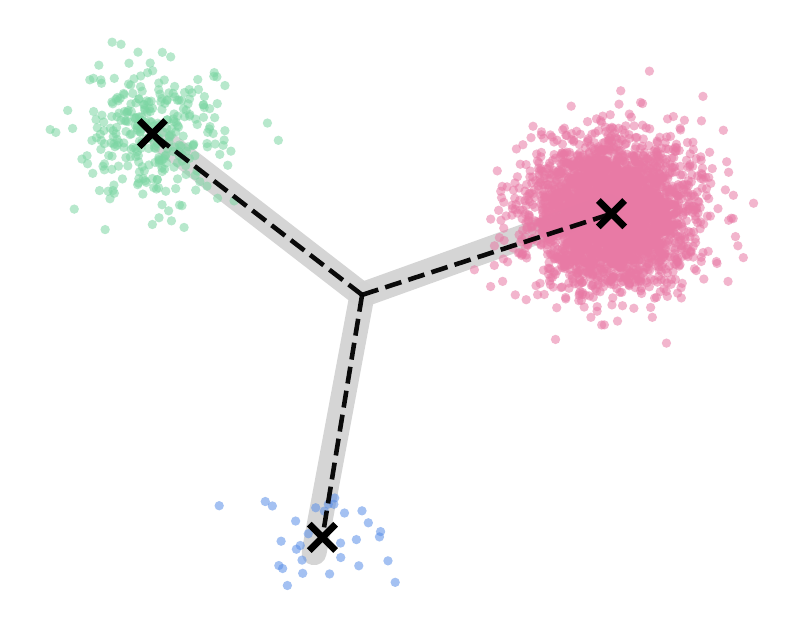} 
    \caption{Ours}
    \label{fig:ours}
  \end{subfigure}

  \caption{Toy examples with 2-dimensional features and 3 classes to illustrate the feature learning of classification. Black crosses are the class means, and gray lines are the classifier weights.}
  \label{fig:baseline_vs_ours}
\end{figure}

To make the above question concrete, we need an explicit description of the classifier at the end of training. The phenomenon of Neural Collapse (NC) provides such a description: At the terminal phase of training, within-class features collapse to class means, the class means form an ideal Equiangular Tight Frame (Simplex ETF) while the classifier weights align accordingly, yielding a highly symmetric last-layer geometry. Under NC geometry, such an ideal symmetric configuration will force all classes to have an equal average loss. This points to a direct design principle for reweighting strategy: reweighting should explicitly drive class-wise loss equally at the end of training.

Based on this principle, we propose to view long-tailed reweighting as an inverse problem. With class-wise equal loss as the ideal objective, we formulate a Tikhonov-regularized inverse problem, obtain a closed-form per-class solution of the optimal loss weight for each class, and implement it in a batch-wise plug-and-play manner. Extensive experiments verify the effectiveness and the validity of our reweighting strategy. Specifically, the observation of the NC metric demonstrates our method can better recover the ideal ETF geometric under a long-tailed scenario while consistently outperforming other reweighting strategies under different imbalance ratios. Meanwhile, when working with other long-tailed approaches, our methods can consistently outperform the original methods and achieve the state-of-the-art (SOTA) performance. Our main contribution can be summarized as follows:
\begin{itemize}
    \item We identify the ideal target missing in loss reweighting for long-tailed training. Current reweighting strategies rarely define an ideal loss distribution they aim to reach, making the design heuristic.

    \item We connect NC with long-tailed reweighting and set class-wise equal loss as an ideal end-state objective. We show that loss imbalance obstructs convergence to an NC-consistent solution, motivating class-wise equal loss as an ideal objective.

    \item  We cast reweighting for long-tailed learning as an inverse problem and derive a plug-and-play algorithm. With the loss-balancing target, we formalize an inverse objective with a per-class closed-form solution. 
\end{itemize}

\section{Preliminaries }
\subsection{Notation}
The given training set consists of $C$ classes is imbalanced, and our goal is to train a model that can generalize well on balanced sets. Each class contains $n_c$ samples and each sample can be denoted as $\{(x_{i,c}, y_{i,c})\}$ where $x_{i,c}\in\mathbb{R}^d$ denotes the $i_{th}$ input sample of class $c$ and $y_{i,c}=c$ denotes its real label. We consider the layers before the classifier as a feature extractor that the mapping ${h}$: $\mathbb{R}^d\to\mathbb{R}^p$ outputs a $p$-dimensional feature vector ${h}(x)$. A linear classifier with weight matrix $\mathbf{W} \in \mathbb{R}^{C\times p}$ and biases $b\in \mathbb{R}^C$ takes the last-layer features as inputs and then outputs the class label. Specifically, through classification scores via $f(x)=\mathbf{W}h(x)+b$, the predicted label is then given by $argmax_{c'}\langle  w_{c'},h\rangle +b_{c'}$, where $w_{c'}$ denotes the classifier weight for a specific class. Furthermore, we denote the class mean $\mu_c = \frac{1}{n_c}\sum_{i=1}^{n_c} h(x_{i,c})$, the global mean $\mu_G = \frac{1}{C}\sum_{c=1}^C \mu_c$ and the centered class mean \(\hat\mu_c=\mu_c-\mu_G\).

For a sample \((x_{i,c},y_{i,c})\) the logits and the softmax probabilities can be defined as:
\begin{align}
& z(x_{i,c}) := W h(x_{i,c}) \in \mathbb{R}^C,\notag \\
& p_k(x_{i,c}; W) := \frac{\exp(z_k(x_{i,c}))}{\sum_{j=1}^C \exp(z_j(x_{i,c}))}.
\end{align}
Let \(\ell\) be any label-symmetric per-sample loss:
\begin{align}
& \ell(W, h(x_{i,c}), c)
:= \psi(z(x_{i,c}),c), \notag \\
& where\ \ z(x_{i,c}) := W h(x_{i,c}).
\end{align}
The class-wise  average loss of class $c$ is:
\begin{equation}
    L_c(W) := \frac{1}{n_c} \sum_{i=1}^{n_c} \ell\!\left(W, h(x_{i,c}), c\right).
\end{equation}

\subsection{Simplex ETF}
A Simplex Equiangular Tight Frame (Simplex ETF) in $\mathbb{R}^p$ is defined as a set of 
$C$ vectors obtained from the columns of a matrix 
$\mathbf{M} \in \mathbb{R}^{p \times C}$. 
A general representation of this matrix is
\begin{equation}
\mathbf{M}
= \sqrt{\frac{C}{C-1}}\;
\mathbf{R}\!\left(
\mathbf{I}
-\frac{1}{C-1}\mathbf{1}_{C}\mathbf{1}_{C}^{\!\top}
\right),
\end{equation}
where $\mathbf{I} \in \mathbb{R}^{C \times C}$ is the identity matrix and 
$\mathbf{1}_{C} \in \mathbb{R}^{C \times 1}$ is the vector of ones. 
The matrix $\mathbf{R} \in \mathbb{R}^{p \times C}$ is an orthogonal 
rotation satisfying $\mathbf{R}^\top \mathbf{R} = \mathbf{I}$. 
Thus, the resulting matrix 
$\mathbf{M} = [\,m_1, m_2, \ldots, m_C\,]$ 
gives $C$ class-specific vectors in $\mathbb{R}^p$, 
with each column $m_c$ representing the weight associated with class $c$.

\subsection{Neural Collapse}
As Papyan \cite{Neural_Collapse} demonstrated, at the terminal phase of the training for balanced datasets, the last-layer feature will converge to the class means, aligning with the classifier weights and converging to a symmetric Simplex ETF structure. This phenomenon is called Neural Collapse (NC) and can be formally described in the following four phases:

\paragraph{(NC1) Within-Class Variability Collapse:} 
As training continues, the intra-class variation of activations diminishes, causing the activations of samples from the same class to collapse toward their class means. Specifically, it means the within-class covariance matrix $\Sigma_W$ approaches zero:
\begin{align}
&\Sigma_W    = \frac{1}{Cn} \sum_{c=1}^{C} \sum_{i=1}^{n} \left( h(x_{i,c}) - \mu_c \right) \left( h(x_{i,c}) - \mu_c \right)^\top, \notag \\
  &\Sigma_W\to 0,
\end{align}

\paragraph{(NC2) Convergence to a Simplex ETF:} The mean vectors of each class will converge to a simplex ETF. Let $\dot{\mu_c}=(\mu_c-\mu_G)/\|\mu_c-\mu_G\|_2$ denotes the re-normalized class means, the NC2 process can be described as:

\begin{align}
&\|\mu_c-\mu_G\|_2 - \|\mu_{c'}-\mu_G\|_2 \to 0, \notag \quad \forall c,c',\\
&\langle\dot{\mu}_c,\dot{\mu}_{c'}\rangle \to \frac{C}{C-1}\delta_{c,c'} - \frac{1}{C-1}, \quad \forall c,c',
\end{align}
where $\delta_{c,c'}$ denotes the Kronecker delta symbol. 
\paragraph{(NC3) Convergence to Self-Duality:} The classifier weight \(w_c\) will gradually aligned with the corresponding centered class mean \(\hat{\mu_c}\). The NC3 process can be described as:
\begin{align}
    \left\| \frac{\mathbf{W^\top}}{\|\mathbf{W}\|_F}-\frac{\mathbf{\dot{M}^\top}}{\|\mathbf{\dot{M}}\|_F}\right\|_F \to 0,
\end{align}
where $\dot{M}=[\hat{\mu_c},\ c=1,...,C] \in \mathbb{R}^{p\times C}$.  

\paragraph{(NC4) Simplification to Nearest Center}
Given a feature, the neural network classifier converges to the nearest class mean. The NC4 process can be described as:
\begin{align}
    \arg\max_{c'} \langle {w}_{c'}, 
    {h} \rangle + b_{c'} \;\to\; \arg\min_{c'} \|{h} - {\mu}_{c'}\|_2.
\end{align}

\section{Ideal Objective of Reweighting Strategies For Imbalance Learning}
\subsection{Missing Objective For Current Reweighting Strategies}
Under a long-tailed distribution, the standard cross-entropy loss is unable to generalize well since the model tends to over-fit the head classes while neglecting the tail classes. To alleviate this, a variety of reweighting strategies are proposed, including static or adaptive sample frequency-based reweighting strategies \cite{CB_loss,focal_loss} and some advanced dynamic strategies based on the feature space \cite{range_loss} or decision boundaries \cite{IB_loss} and meta-learning approaches \cite{TCR_loss}. More details of the mentioned reweighting approaches are shown in the Appendix~\ref{reweighted}. 

However, existing reweighting approaches are target-agnostic, and they do not specify what the class-wise loss distribution should look like, resulting in heuristic ways to design different reweighting strategies. To address this pitfall, we introduce the phenomenon of Neural Collapse (NC) which describes the ideal geometric behavior of a well-trained Neural Network at the end phase of training. Based on the insights given by NC, we derive and analyze an ideal loss configuration in the following section.

\subsection{Neural Collapse Inspired Ideal Objective}
Under balanced training, at the terminal phase of training, the Neural Network will exhibits NC phenomenon where the classifier weights will gradually align with the class mean and converge to an ideal geometry of Simplex ETF. Thus, many imbalanced learning approaches are trying to recover such a phenomenon with different strategies. Under NC, all classes are geometrically symmetric in the last-layer representation at the end of training. With such an ideal property, the following theorem shows that this symmetry forces their average losses to be identical.
\begin{theorem}\label{equal-loss}
    Assume NC1, NC2, and NC3 when the centered class means are aligned with the classifier weights and converged to a Simplex ETF. 
      Then every class has the same class-wise average loss:
    \begin{equation}
        L_1(W) = L_2(W) = \cdots = L_C(W).
    \end{equation}
\end{theorem}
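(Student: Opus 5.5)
The plan is to reduce every class-wise average loss to a single function of the ETF geometry, and then use a label permutation that preserves that geometry to show all the values coincide. Note that the logits are $z = Wh$ with no bias, which matches the definition of $\ell$.

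First, invoke NC1. With $\Sigma_W = 0$, every feature $h(x_{i,c})$ equals its class mean $\mu_c$. Hence
\begin{equation*}
L_c(W) = \frac{1}{n_c}\sum_{i=1}^{n_c}\psi(W\mu_c, c) = \psi(W\mu_c, c).
\end{equation*}
This removes the dependence on $n_c$ entirely, which is the crucial point in the imbalanced setting. The problem is now a statement about the $C$ logit vectors $z^{(c)} := W\mu_c$.

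Second, compute these logit vectors from NC2 and NC3. Write $\mu_c = \hat\mu_c + \mu_G$. By NC3, $W^\top = \lambda \dot M$ for some scalar $\lambda > 0$, so $w_k = \lambda \hat\mu_k$. By NC2, $\|\hat\mu_c\| = \rho$ is constant and $\langle \hat\mu_k, \hat\mu_c\rangle = \rho^2\big(\tfrac{C}{C-1}\delta_{kc} - \tfrac{1}{C-1}\big)$. Therefore
\begin{equation*}
z^{(c)}_k = \lambda\rho^2\Big(\tfrac{C}{C-1}\delta_{kc} - \tfrac{1}{C-1}\Big) + \lambda\langle \hat\mu_k, \mu_G\rangle .
\end{equation*}
The last term is where I expect the main obstacle. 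It contributes the vector $W\mu_G$, which is the same for every class $c$ but need not be a multiple of $\mathbf{1}_C$. I would dispose of it as follows. The centered means of an ETF satisfy $\sum_k \hat\mu_k = 0$, and they span a $(C-1)$-dimensional subspace. In the standard NC reading, $\mu_G$ is removed, either by working with centered features or because it is absorbed into the bias, which the logits here omit. If instead $\mu_G$ is taken to be orthogonal to $\operatorname{span}\{\hat\mu_k\}$, then $\langle \hat\mu_k, \mu_G\rangle = 0$ and the term vanishes. I will state this explicitly as part of interpreting ``centered class means aligned with the classifier'', i.e.\ the features entering $W$ are centered. Under that convention,
\begin{equation*}
z^{(c)} = a\, e_c + b\,\mathbf{1}_C, \qquad a = \tfrac{C}{C-1}\lambda\rho^2, \quad b = -\tfrac{1}{C-1}\lambda\rho^2 .
\end{equation*}

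Third, use label symmetry. Let $P_\sigma$ be the permutation matrix of a permutation $\sigma$ with $\sigma(1) = c$. Then $z^{(c)} = P_\sigma z^{(1)}$, since $P_\sigma e_1 = e_c$ and $P_\sigma \mathbf{1}_C = \mathbf{1}_C$. Label symmetry of $\psi$ means $\psi(P_\sigma z, \sigma(y)) = \psi(z, y)$, which is exactly what cross-entropy and similar losses satisfy. Applying it gives $L_c(W) = \psi(P_\sigma z^{(1)}, \sigma(1)) = \psi(z^{(1)}, 1) = L_1(W)$ for every $c$. For cross-entropy this can also be checked directly: every class loss equals $\log\big(1 + (C-1)e^{-a}\big)$.
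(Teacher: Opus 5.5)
Your proposal is correct and follows essentially the same route as the paper's proof. Both arguments use NC3 to write $w_k = \lambda\hat\mu_k$, then NC2 to show that every class produces the same logit pattern up to relabeling (diagonal $\lambda\rho^2$, off-diagonal $-\lambda\rho^2/(C-1)$), and then conclude by label symmetry. Your version is somewhat more careful in three places: you invoke NC1 explicitly to remove the $n_c$-dependence; you state the centered-feature convention openly, whereas the paper just asserts the derivation can be done ``equivalently'' in centered coordinates and so silently drops $W\mu_G$; and you make the paper's ``same pattern'' step precise through $z^{(c)} = P_\sigma z^{(1)}$.
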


\begin{proof}
    Shown in Appendix~\ref{proof for 3.1}
\end{proof}

Theorem~\ref{equal-loss} characterizes what an ideal class-wise average loss distribution looks like under Neural collapse at the end phase of training: once the features and the classifier weights have aligned and converged to a simplex ETF, all classes necessarily share the same average loss. However, for long-tailed training scenarios, the class-wise losses are typically imbalanced. This raises a natural question: \textit{Can the network still converge to an ideal simplex ETF if such a loss imbalance persists?}

Recall the per-class average losses $L_c(W)$, we have the global mean
$\bar L(W) = \frac{1}{C}\sum_{c=1}^C  L_c(W)$.
We define the \emph{class-wise loss imbalance coefficient} by
\begin{align}
    & \rho(W)
    := \frac{
        \sqrt{\frac{1}{C}\sum_{c=1}^C \big(L_c(W) - \bar L(W)\big)^2}
    }{
        \bar L(W)
    },
    \notag \\
    & \text{whenever } \bar L(W) > 0.
    \label{eq:loss-imbalance}
\end{align}

By definition $\rho(W) \ge 0$, and $\rho(W) = 0$ if and only if
$L_1(W) = \dots = L_C(W)$, where all classes have exactly the same
average loss. The next theorem precisely discuss that as long as \(\rho(W)\) does not equal to zero, the model will not converge to an ideal ETF structure.

\begin{theorem}[Loss imbalance precludes convergence to ETF]
\label{thm:imbalance-no-etf}
Let $\{W_t\}
 ,{t\ge 0}$ denote the sequence of classifier parameters
produced by training at time step t. For each $t$, let $L_c(W_t)$ be the class-wise
average losses and let $\rho(W_t)$ be the loss imbalance coefficient.
Assume that:
\begin{enumerate}
    \item There exist constants $\varepsilon > 0$ and $T \in \mathbb{N}$
    such that :
    \begin{align*}
        \rho(W_t) \;\ge\; \varepsilon,
        \qquad \forall\, t \ge T .
    \end{align*}
    Where the loss for each class remains imbalanced. 
    \item The sequence $\{W_t\}$ admits at least one limit point in
    parameter space.
\end{enumerate}
Then any limit point $W_\star$ of $\{W_t\}$ \emph{cannot} be an ETF
solution satisfying NC1-NC3. In particular, if $\{W_t\}$ converges to
some $W_\dagger$, then $W_\dagger$ is not an ETF solution.

\end{theorem}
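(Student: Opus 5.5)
The plan is a contrapositive argument that combines Theorem~\ref{equal-loss} with continuity of the class-wise losses. Suppose, for contradiction, that some limit point $W_\star$ of $\{W_t\}$ is an ETF solution satisfying NC1--NC3. By Theorem~\ref{equal-loss}, $L_1(W_\star)=\cdots=L_C(W_\star)$, so $\rho(W_\star)=0$, provided $\bar L(W_\star)>0$ so that $\rho$ is defined there. I will check positivity separately. For cross-entropy every $L_c$ is strictly positive at any finite $W$, since softmax probabilities are strictly less than one. For a general label-symmetric $\psi$ I will add the standing assumption $\bar L>0$ near $W_\star$, which the definition of $\rho$ implicitly requires anyway.

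Next I will establish continuity of $\rho$ at $W_\star$. Treat the features $h(x_{i,c})$ as fixed, or as converging along the same subsequence together with $W_t$; in the latter case $W$ should be read as the pair of classifier and features. Each $L_c(W)=\frac{1}{n_c}\sum_i \psi(Wh(x_{i,c}),c)$ is then continuous whenever $\psi$ is continuous in the logits, as softmax cross-entropy is. Hence $\bar L$ is continuous, and the standard deviation $\sqrt{\frac1C\sum_c(L_c-\bar L)^2}$ is continuous. On the open set where $\bar L>0$, the quotient $\rho$ is therefore continuous.

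Now pick a subsequence $t_k\to\infty$ with $W_{t_k}\to W_\star$. Continuity gives $\rho(W_{t_k})\to\rho(W_\star)=0$. Assumption~1, however, gives $\rho(W_{t_k})\ge\varepsilon>0$ for all $t_k\ge T$, so the limit is at least $\varepsilon$, a contradiction. Therefore no limit point is an ETF solution. The convergence case is immediate: if $W_t\to W_\dagger$, then $W_\dagger$ is the unique limit point.

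The main obstacle is making ``limit point in parameter space'' consistent with the quantities that Theorem~\ref{equal-loss} uses. NC1--NC3 concern features as well as $W$. If the features keep moving, or if $\|W_t\|\to\infty$ (the typical situation, since under cross-entropy the loss tends to zero), then there may be no finite limit point. The bounded case is exactly what assumption~2 secures. If needed, I would also argue via normalized iterates: ETF geometry is scale-invariant, but $\rho$ need not be, so I would restrict the claim to genuine limit points where $\bar L(W_\star)>0$. I expect to state these regularity conditions explicitly rather than attempt to remove them.
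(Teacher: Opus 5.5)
Your proposal is correct and follows the paper's route. In both, ETF solutions have $\rho=0$ by Theorem~\ref{equal-loss}, and continuity of $\rho$ on $\{\bar L>0\}$ carries $\rho(W_{t_k})\ge\varepsilon$ to any limit point, which you phrase as a contradiction and the paper as disjointness of $\{\rho=0\}$ and $\{\rho\ge\varepsilon\}$. Your explicit check that $\bar L(W_\star)>0$ (automatic for cross-entropy at finite $W$), and your remark that the features must be held fixed or included in $W$, make precise two points the paper's proof uses only implicitly.
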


\begin{proof}
    Shown in Appendix~\ref{proof for 3.2}
\end{proof}

Theorem~\ref{thm:imbalance-no-etf} illustrates that loss imbalance is the fundamental obstruction that prevents the neural network from reaching the ideal NC geometry under a data imbalance setting. This observation naturally leads to the central objective of our reweighting method: \textbf{to enforce class-wise average loss balance}. The next section introduces our approach that explicitly targets this goal by treating reweighting as an inverse problem.

\section{Proposed Method}
\subsection{Reweighting as an Inverse Problem}
As we demonstrated in the previous analysis, to recover the ideal ETF structure of the Neural Collapse process, at the end phase of training, each class should have balanced loss where the imbalance factor \(\rho(w)\) should converge to 0. Conversely, if \(\rho(w)\) remains bounded away from zero, the centered class mean and the classifier weights will fail to converge. This gives a very concrete target to design the reweighting loss strategy:
% To recover the simplex-ETF structure of the NC progress,
\paragraph{Target:} To move long-tailed training toward an NC-consistent regime, a reweighting mechanism should dynamically adjust the effective contribution of each class so that their average losses become
as balance as possible, actively driving the imbalance factor \(\rho(w)\to 0\) at the end stage of training. 

Most reweighting methods for long-tailed learning adopt a forward view: they directly prescribe a class weights \(\{w_c\}_{c=1}^C\). 
%%%%这里加一点rethinking
However, we argue that, with the ideal reweighting target mentioned above, \textbf{we can take an inverse view instead}. Specifically, we treat reweighting as an inverse problem: with the ideal balanced class-wise losses objective, we then infer the underlying class weights that produce this behavior. Formally, we can define the following Tikhonov-regularized inverse problem:
\begin{problem}
    Given the current network parameter W, consider the class-wise losses \({L_c(W)}_{c=1}^C\) and their mean \(\bar{L}(w)\). The ETF condition suggests that, ideally, each effective class loss should have an equal loss \(\bar{L}(W)\). Given \(\alpha\) as a scalar, we therefore propose to obtain the class weights\(\{w_c\}\) by solving the following inverse problem
    \begin{equation}\label{inverse_problem}
\arg\min_{\{w_c\}}
\sum_{c=1}^{C}
\left( w_c L_c(W) - \bar{L}(W) \right)^2
\;+\;
\alpha \left( w_c - w_c^{(0)} \right)^2  
\end{equation}
\end{problem}
In Equation~\ref{inverse_problem}, the first term encourages loss equalization which encourages \(w_cL_c(W)\) towards the ideal target \(\bar{L}(W)\). The second term is an optional Tikhonov regularizer that keeps
\(w_c\) close to a prior weight \(w_c^{(0)}\) from the original method(if available, otherwise \(w_c^{(0)}=1\)), inferred weights remain compatible with the original method.
 
Notably, the objective in Equation~\ref{inverse_problem} is separable over classes. Thus, we can define the following class-wise inverse problem:
\begin{problem}\label{class problem}
   For each class c, we can obtain the class weights \(w_c\) by solving the following optimization problem:
   \begin{equation}\label{class-inverse}
\arg\min_{w_c} \left( w_c L_c(W) - \bar{L}(W) \right)^2 
\;+\;
\alpha \left( w_c - w_c^{(0)} \right)^2 .
\end{equation}
\end{problem}

The separable quadratic admits a closed-form solution. Thus, we have the following theorem for per-class optimal weights. 

\begin{theorem}[Closed-form solution for per-class optimal weights] \label{closed-form}
Then the objective in Equation~\eqref{class-inverse} is strictly convex and admits a unique minimizer
\begin{equation}
   \boxed{w_c^\star(W)
    =
    \frac{\bar L(W)\,L_c(W) + \alpha\,w_c^{(0)}}
         {L_c(W)^2 + \alpha}.}
    \label{eq:wc-closed-form}
\end{equation}
\end{theorem}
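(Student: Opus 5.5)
We view the objective in \eqref{class-inverse} as a scalar quadratic in the single variable $w_c$, with the network parameters $W$ held fixed so that $L_c(W)$ and $\bar L(W)$ are constants. Write $L_c := L_c(W)$ and $\bar L := \bar L(W)$, and denote the objective by
\begin{equation*}
J_c(w_c) = (w_c L_c - \bar L)^2 + \alpha (w_c - w_c^{(0)})^2 .
\end{equation*}
Expanding gives
\begin{equation*}
J_c(w_c) = (L_c^2+\alpha)\,w_c^2 - 2(\bar L L_c + \alpha w_c^{(0)})\,w_c + \bar L^2 + \alpha (w_c^{(0)})^2 ,
\end{equation*}
a quadratic with leading coefficient $L_c^2+\alpha$.

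First we establish strict convexity. The second derivative is the constant $J_c''(w_c) = 2(L_c^2+\alpha)$. Here we must make the implicit assumption on $\alpha$ explicit, since the statement only calls $\alpha$ a scalar. We assume $\alpha>0$ (the Tikhonov regularizer is active). Then $L_c^2+\alpha>0$ for every value of $L_c$, so $J_c$ is strictly convex.

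If $\alpha = 0$ is allowed, strict convexity needs $L_c \neq 0$. This holds for cross-entropy-type losses, which are strictly positive at finite logits, so $L_c>0$. Pinning down this hypothesis is the only real subtlety. We will state it as: $\alpha>0$, or $\alpha\ge 0$ with $L_c(W)>0$.

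Second, a strictly convex quadratic that is coercive (leading coefficient positive) has exactly one minimizer, namely its unique stationary point. Setting the derivative to zero gives
\begin{equation*}
J_c'(w_c) = 2L_c(w_c L_c - \bar L) + 2\alpha(w_c - w_c^{(0)}) = 0 .
\end{equation*}
This rearranges to $(L_c^2+\alpha)\,w_c = \bar L L_c + \alpha w_c^{(0)}$. Dividing by the positive quantity $L_c^2+\alpha$ yields \eqref{eq:wc-closed-form}.

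Finally, by the separability noted before Problem~\ref{class problem}, the joint objective \eqref{inverse_problem} is the sum of the $J_c$, each depending on its own variable. Its unique minimizer is therefore the vector of the individual $w_c^\star(W)$. No step is genuinely hard; the main care point is the positivity condition on $L_c^2+\alpha$ described above.
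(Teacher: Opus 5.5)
Your proof is correct and follows essentially the same argument as the paper: treat the objective as a scalar quadratic with leading coefficient $L_c^2+\alpha$, conclude strict convexity, and solve the first-order condition $(L_c^2+\alpha)\,w_c = \bar L L_c + \alpha w_c^{(0)}$. Your explicit hypothesis ($\alpha>0$, or $\alpha\ge 0$ with $L_c(W)>0$) is a worthwhile addition. The paper simply asserts $L_c^2+\alpha>0$ without stating it, even though its experiments use $\alpha=0$.
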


\begin{proof}
    Shown in Appendix~\ref{proof for 4.1}
\end{proof}
\iffalse
\begin{proof}
Denote $L_c = L_c(W)$ and $\bar L = \bar L(W)$ and define
\[
    \phi_c(w_c)
    := \big(w_c L_c - \bar L\big)^2
       + \alpha \big(w_c - w_c^{(0)}\big)^2 .
\]
This is a quadratic function in $w_c$ with a leading coefficient
$L_c^2 + \alpha > 0$, hence $\phi_c$ is strictly convex and has a unique
global minimizer characterized by the first-order optimality condition
$\frac{d}{dw_c} \phi_c(w_c) = 0$.

Differentiating and setting the derivative to zero gives
\[
    2 \big(w_c L_c - \bar L\big) L_c
    + 2 \alpha \big(w_c - w_c^{(0)}\big)
    = 0,
\]
which simplifies to
\[
    (L_c^2 + \alpha)\, w_c
    = \bar L\, L_c + \alpha\, w_c^{(0)}.
\]
Solving for $w_c$ yields
\[
    w_c^\star
    = \frac{\bar L\,L_c + \alpha\,w_c^{(0)}}{L_c^2 + \alpha},
\]
which is exactly \eqref{eq:wc-closed-form}.  
Therefore, $w_c^\star(W)$ is the unique minimizer of \eqref{class-inverse},
completing the proof.
\end{proof}
\fi

\subsection{Batch-Wise Inverse Reweighting}
Based on the inverse formulation in the previous section, we now turn it into a practical training algorithm. In practice, our method performs a batch-wise inverse reweighting strategy: for each training batch, we look at the classes that actually appear in this batch and solve the inverse problem shown in Problem~\ref{class problem} with the closed form in Equation~\ref{eq:wc-closed-form} to obtain the batch-wise weight \(w_c\) for each class. 

However, due to the class-frequency skew, different classes appear in very different numbers of mini-batches throughout training, where head classes appear in almost every batch while the tail classes are observed sporadically. As a result, even though the tail classes are properly reweighted whenever they appear, their cumulative optimization strength over the entire training process remains significantly weaker compared to the head classes. To alleviate this issue, we further introduce a macro-level reweighting strategy in the next section. 

\subsection{Macro-Level Reweighting}

Since the effective contribution of a class depends not only on its loss weight within a batch, but also on how frequently it appears in optimization across batches, the optimization strength for head classes is significantly higher than that of the tail classes. To address this, we introduce a macro-level, batch-frequency-aware reweighting mechanism, which complements our batch-wise inverse formulation. 

Let \(B_c\) denote the number of mini-batches that class \(c\) appears during training (we track this quantity online that increments \(B_c\) by one whenever class \(c\) appears in a batch). We then define the macro reweighting factor for each class as follows:
\begin{equation}
    \beta_c \propto (B_c)^{-\gamma} ,
\end{equation}
where \(\gamma \ge0\) controls the strength of the macro compensation. To avoid the global loss scale, the macro reweighting weights \(\beta_c\) are normalized to have a unit mean. Thus, we can have the final effective class weight \(\hat{w}_c\):
\begin{equation}
    \hat{w}_c = \beta_c \cdot w_c^\star \quad ,
\end{equation}
where \(w_c^\star\) is the batch-wise class weight obtained from the closed form in Theorem~\ref{closed-form}. The full algorithm of our method is shown in the Algorithm~\ref{algorithm}.

%\paragraph{Compatibility with arbitrary base losses} A key property of our method is that it is loss-agnostic and can be wrapped around arbitrary base objectives. Given any base loss \(L_{base}\) (for example, can be a loss for an imbalance learning method), we only need to calculate the per-class average loss\(\hat{L}_{c}^{base}\) on the current batch and the mean loss loss\(\hat{L}_{avg}^{base}\) over classes in the current batch and substitute them back to equation~\ref{eq:wc-closed-form} above. In this way, our method can act like an out-layer module that can be directly plug-in to most of the current long-tailed algorithms. Meanwhile, the Tikhonov term controlled by \(\alpha\) grantees the new loss after our reweighting  strategy do not deviate too far from the original ones. In short, this simple and elegant plug-and-play designs allows our inverse reweighting strategy to consistently work with and improve other long-tailed methods while keeping their behavior stable.

\section{Experiments}
In this section, we evaluate our proposed method on long-tailed benchmarks. All experiments are conducted on NVIDIA RTX 4090 GPUs, and the results are reported as the mean over three random seeds. \textbf{Experiment details can be found in Appendix~\ref{experiment detail}}.

\subsection{Performance Comparison With Other Reweighting Method}\label{7.1}
\begin{table*}[t]
  \centering
  \caption{Performance comparison of different methods.}
  \label{tab:method_accuracy}

  \small % 可选：让表格更紧凑
  \setlength{\tabcolsep}{4pt} % 可选：列间距，太挤可调 3pt/2pt
 \resizebox{0.6\linewidth}{!}{
  \begin{tabular}{lccccccccc}
    \toprule
    Method & CE & Inv-Freq & Inv-Sqrt & CB & Focal & TCR & IB & Range & Ours \\
    \midrule
    Accuracy(IF=50)  & 45.51 & 47.66 & 47.85 & 48.11 & 48.71 & 50.19 & 48.02 & 48.21 & \textbf{52.59} \\
    \midrule
    Accuracy(IF=100) & 41.62 & 38.19 & 42.45 & 43.34 & 42.13 & 45.35 & 45.14 & 43.03 & \textbf{47.88} \\
    \midrule
    Accuracy(IF=200) & 36.78 & 25.25 & 35.69 & 39.05 & 38.22 & 40.57 & 40.55 & 38.51 & \textbf{43.14} \\
    \bottomrule
  \end{tabular}
  }
\end{table*}

To evaluate the effectiveness of our reweighting strategy, we compare our algorithm with other long-tailed reweighting methods. We train the model for 200 epochs using ResNet-32 \cite{ResNet32} as backbone on CIFAR-100-LT \cite{CIFAR10} and compare the following eight different reweighting loss strategies: 
(i) standard cross-entropy (CE),
(ii) inverse-frequency weighting (Inv-Freq),
(iii) inverse-square-root weighting (Inv-Sqrt),
(iv) class-balanced reweighting (CB \cite{CB_loss}),
(v) range reweighting (Range \cite{range_loss}),
(vi) two-component reweighting (TCR \cite{TCR_loss}),
(vii) focal reweighting (Focal \cite{focal_loss}),
and (viii) influenced-balanced reweighting (IB \cite{IB_loss}).
All hyper-parameters were either tuned to their optimal values or set according to the recommendations in the original paper. Further details of the above eight methods and experiments can be found in Appendix~\ref{experiment detail}. 

We evaluate our proposed reweighting method with eight other reweighting methods, and we show the performance in Table~\ref{tab:method_accuracy}. We conduct this experiment on the CIFAR-100-LT dataset under three imbalance factors (IF=50, 100, 200). In the table, we can see that our proposed reweighting method achieves the greatest performance under all three imbalance ratios and improves 7.08\%, 6.26\%, and 6.36\% points from the cross-entropy baseline, respectively.

\subsection{The Evolution of NC Metrics Across Different Reweighting Strategies}

In this subsection, we analyze how the NC metrics evolve during training under different reweighting strategies. Following prior work \cite{nc_metrics}, we compute the three neural collapse metrics NC1-NC3 on the last-layer features and the classifier to quantify the first three NC properties. The experimental settings in this section are consistent with the previous section~\ref{7.1}.

By using the notation we defined before, we introduce the first three NC metrics as follows: 

\medskip
\noindent\textbf{NC1:} 
The within-class and between-class covariance matrices are 
\[
\Sigma_W
:= \frac{1}{n_c C} \sum_{c=1}^{C}\sum_{i=1}^{n_c}
\bigl(h(x_{i,c}) - \mu_c\bigr)\bigl(h(x_{i,c}) - \mu_c\bigr)^{\top}
\in \mathbb{R}^{p\times p} ,
\]
and
\[
\Sigma_B
:= \frac{1}{C}\sum_{c=1}^{C}
\bigl(\mu_c - \mu_G\bigr)\bigl(\mu_c - \mu_G\bigr)^{\top}
= \frac{1}{C}\sum_{c=1}^{C} \hat{\mu}_c \hat{\mu}_c^{\top}
\in \mathbb{R}^{p\times p} .
\]
With $\Sigma_B^\dagger$ the pseudo inverse of $\Sigma_B$, the first neural collapse metric $NC_1$ is then given by 
\[
NC_1
= \frac{1}{C}\,\mathrm{trace}\bigl(\Sigma_W \Sigma_B^{\dagger}\bigr) .
\]

\noindent\textbf{NC2:}
Let $W \in \mathbb{R}^{C\times p}$ be the classifier weight matrix. The second metric $NC_2$ measures the $\ell_2$ distance between the normalized simplex ETF and the normalized matrix $W W^{\top}$, i.e.,
\[
NC_2
= \left\|
\frac{W W^{\top}}{\lVert W W^{\top} \rVert_F}
-\frac{1}{\sqrt{C-1}}\left(
I_C - \frac{1}{C}\mathbf{1}_C \mathbf{1}_C^{\top}
\right)
\right\|_F .
\]

\noindent\textbf{NC3:}
Let the centered class-mean matrix be
\[
\dot M = [\,\hat{\mu}_1\ \hat{\mu}_2\ \cdots\ \hat{\mu}_C\,]
\in \mathbb{R}^{p\times C}.
\]
The third metric $NC_3$ quantifies the $\ell_2$ distance between the normalized simplex ETF and the normalize matrix $W \dot M$: 
\[
NC_3
= \left\|
\frac{W \dot M}{\lVert W \dot M \rVert_F}
-\frac{1}{\sqrt{C-1}}
\left( I_C - \frac{1}{C}\mathbf{1}_C \mathbf{1}_C^{\top} \right)
\right\|_F .
\]

We track the evolution of NC1-NC3 during training on CIFAR-100-LT \cite{CIFAR10} with the imbalance factor of 100 and compute the NC metrics each epoch. The results of different loss functions are provided in Figure~\ref{fig:nc123_all}. As shown in this figure, we observe that most reweighting methods, including our proposed approach, achieve relatively low NC1 values by the end of training. This is consistent with prior observations in \cite{NC1} that NC1 is usually the easiest NC property to emerge during training. Compared with the baselines, our method achieves lower NC2 and NC3 values at the end of training, indicating that the learned classifier and class means are closer to the simplex ETF geometry and the self-dual alignment described by Neural Collapse. 
% As shown in the figure, we observe that all three NC metrics of our proposed method converge to a small value at the end of training. This result supports the NC theory, suggesting that the learned representations approach the expected collapse geometry during training. 
Moreover, our method also achieves the lowest class-wise imbalance coefficient, further indicating that the proposed inverse reweighting strategy effectively reduces class-wise optimization imbalance.

\begin{figure}[t]
  \centering

  \begin{subfigure}[b]{0.48\linewidth}
    \centering
    \includegraphics[width=\linewidth]{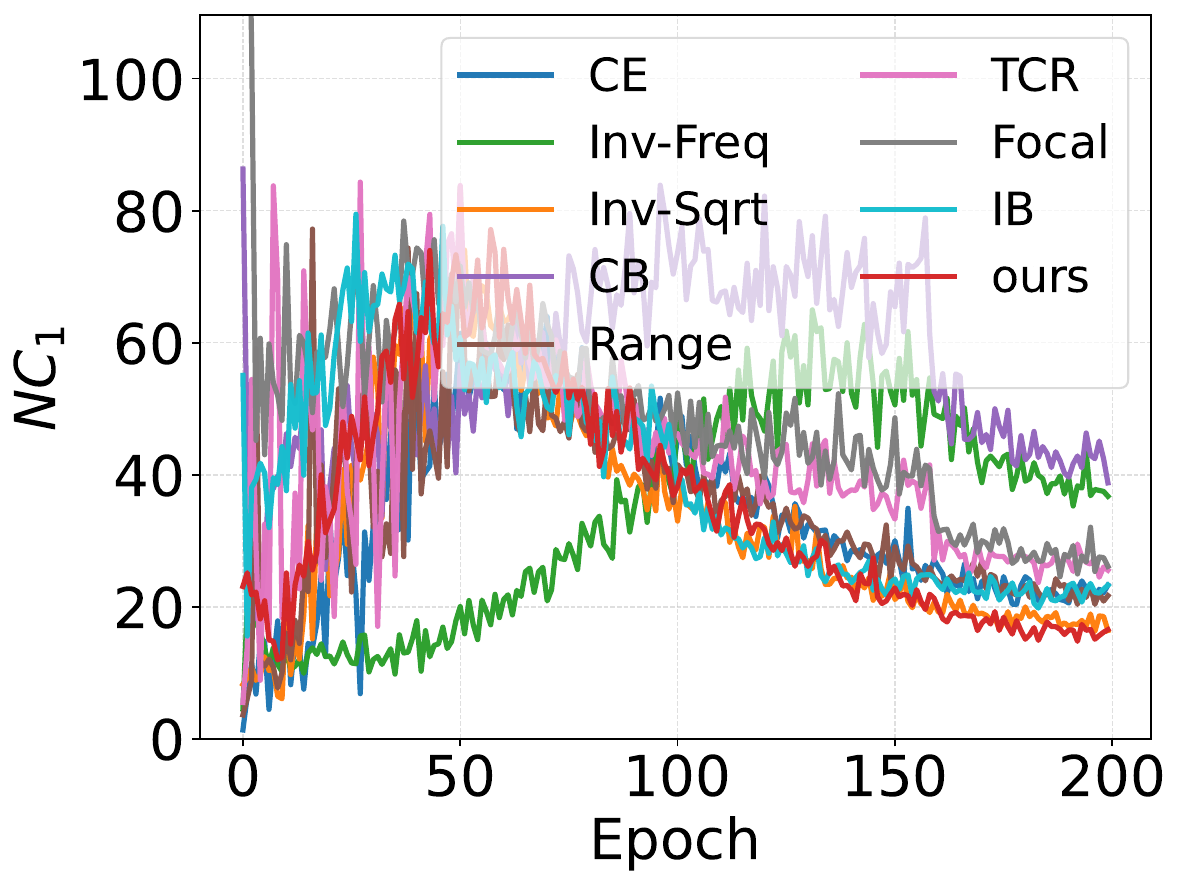}
    \caption{NC1}
    \label{fig:nc1_all}
  \end{subfigure}\hfill
  \begin{subfigure}[b]{0.48\linewidth}
    \centering
    \includegraphics[width=\linewidth]{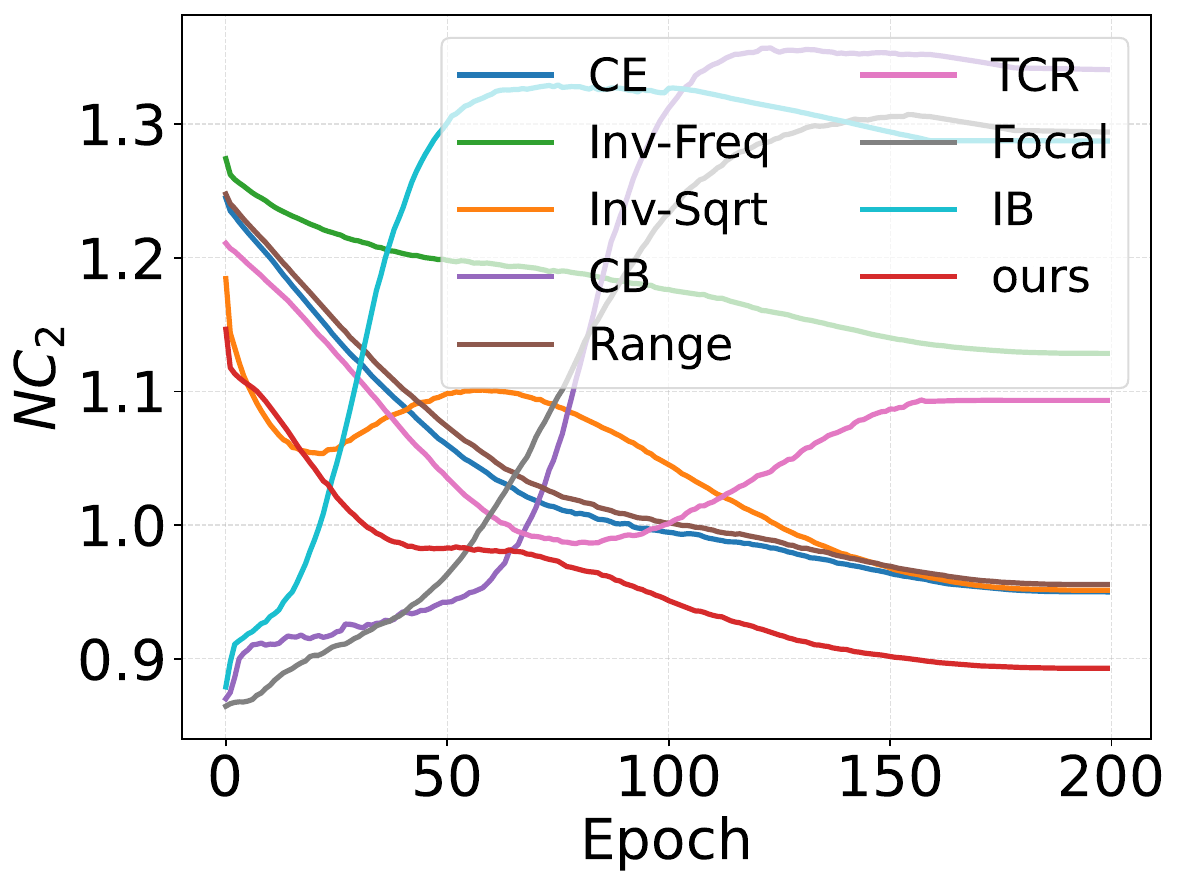}
    \caption{NC2}
    \label{fig:nc2_all}
  \end{subfigure}

  \vspace{2mm} % 可选：调两行之间间距

  \begin{subfigure}[b]{0.48\linewidth}
    \centering
    \includegraphics[width=\linewidth]{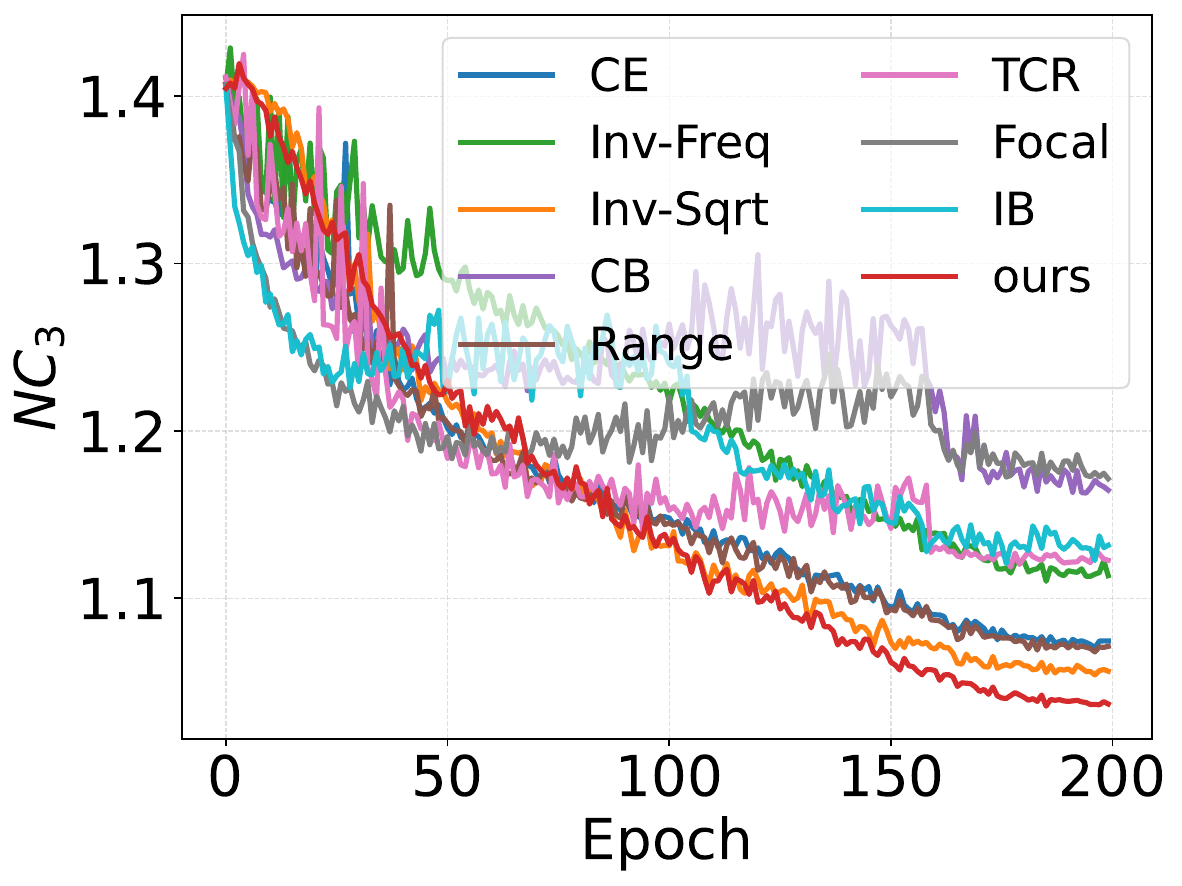}
    \caption{NC3}
    \label{fig:nc3_all}
  \end{subfigure}\hfill
  \begin{subfigure}[b]{0.48\linewidth}
    \centering
    \includegraphics[width=\linewidth]{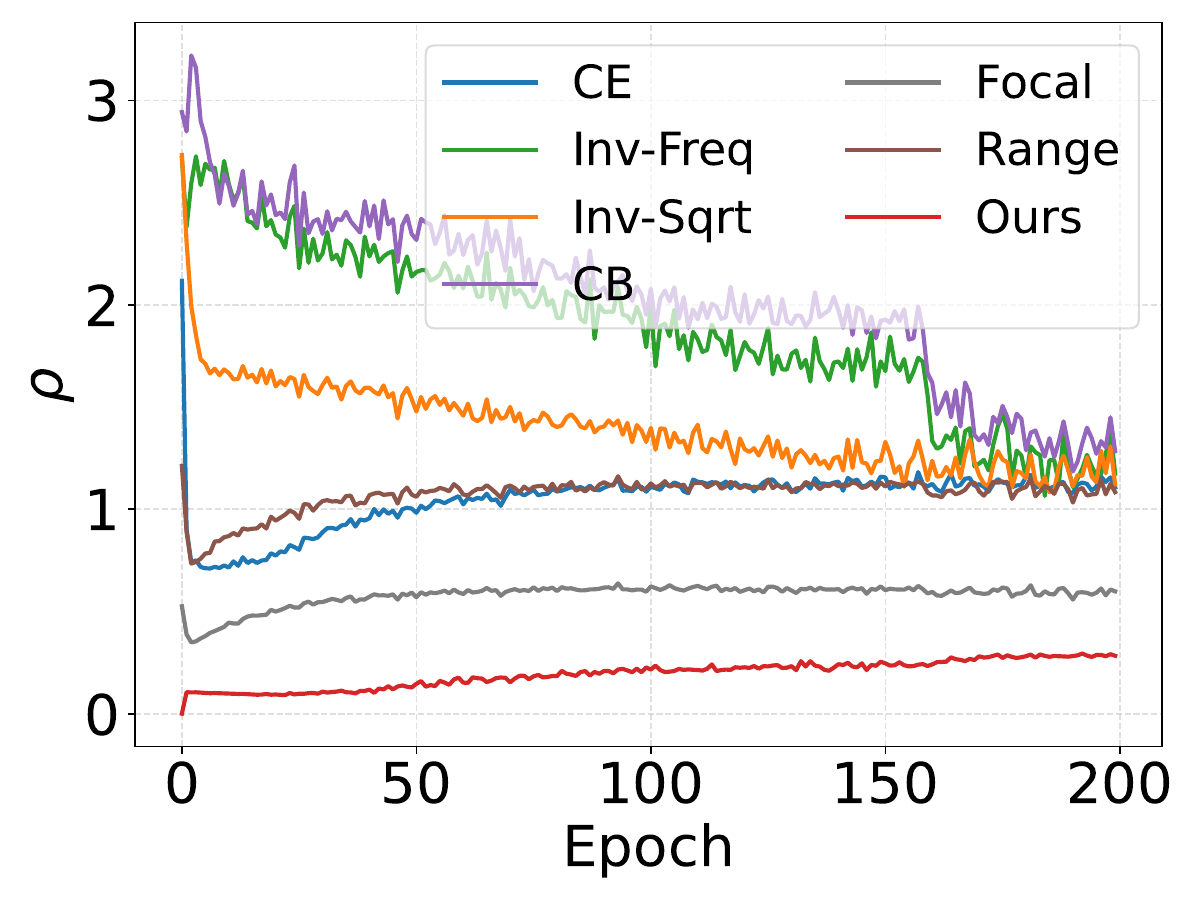}
    \caption{$\rho$}
    \label{fig:rho_all}
  \end{subfigure}

  \caption{The evolution of NC1-NC3 metrics and class-wise imbalance coefficients across different loss functions on the CIFAR-100-LT dataset with the imbalance rate of 100.}
  \label{fig:nc123_all}
\end{figure}

\subsection{Comparison with Different Long-Tailed SOTA Methods}

In this subsection, we conduct extensive experiments to evaluate our proposed method on widely used long-tailed datasets: CIFAR-10-LT, CIFAR-100-LT \cite{CIFAR10}, iNaturalist \cite{van2018inaturalist}, and ImageNet-LT \cite{imagenet}.

%baseline
\paragraph{Baseline}
We compare our proposed method with various representative long-tailed approaches. Some existing baselines use contrastive learning: KCL \cite{KCL}, TSC \cite{TSC}, HCL \cite{HCL}, B-SCL \cite{B-SCL}, and FeatRecon \cite{FeatRecon}. We also choose some two-stage methods: BBN \cite{BBN}, RIDE \cite{RIDE}, MisLAS \cite{MiSLAR}, and IP-DPP \cite{IP-DPP}. Moreover, CMO \cite{CMO}, SEL \cite{SEL}, and GLMC \cite{GLMC} apply data augmentation, and FedYoYo \cite{FedYoYo} uses self-supervised learning to improve model performance. There is also a baseline from an optimization perspective: Focal-SAM \cite{Focal-SAM}. Meanwhile, some recent methods are motivated by Neural Collapse: INC \cite{INC}, ETF-DR \cite{ETF-DR}, and RBL \cite{RBL}. 

\paragraph{CIFAR-LT}
Following the standard evaluation protocol on CIFAR-10-LT and CIFAR-100-LT, we compare our method with other state-of-the-art baselines and report Top-1 accuracy under three imbalance factors (IF=200, 100, 50) in Table~\ref{cifar}. In this table, we use two training settings: applying our proposed loss reweighting strategy only and applying the learning-rate schedule together with the proposed loss reweighting method. Our reweighting strategy achieves strong overall improvements across diverse baselines and imbalance ratios. Adding our loss-reweighting method brings extra improvements, showing that the two components work well together and remain effective across long-tailed settings. Under IF=100, the standard cross-entropy baseline obtains 82.0\% and 48.9\% on CIFAR-10-LT and CIFAR-100-LT, respectively, using the benefits of both components, and gains +6.0\% and +7.3\% from the previous baseline. When combining our method on GLMC, it obtains 84.0\%, 89.1\%, and 91.1\% on CIFAR-10-LT and 52.4\%, 58.6\%, and 63.9\% on CIFAR-100-LT, achieving the SOTA performance. 

\begin{table}[t]
\centering
\caption{Experiment results on CIFAR-LT. Results marked with * are reproduced by us under the same experimental protocol.}
\label{cifar}
\resizebox{0.9\linewidth}{!}{

\begin{tabular}{lcccccc}
\toprule
\multirow{2}{*}{\textbf{Method}} & \multicolumn{3}{c}{\textbf{CIFAR-10-LT}} & \multicolumn{3}{c}{\textbf{CIFAR-100-LT}} \\
\cmidrule(lr){2-4} \cmidrule(lr){5-7}
 & \textbf{200} & \textbf{100} & \textbf{50} & \textbf{200} & \textbf{100} & \textbf{50} \\
\midrule
BBN (CVPR20) & / & 79.9 & 82.2 & / & 42.6 & 47.1 \\
KCL (ICLR21) & /& 77.6 & 81.7 & / & 42.8 & 46.3  \\
TSC (CVPR22) & /& 79.7 & 82.9 & /  & 43.8 & 47.4 \\
HCL (CVPR21) & /& 81.4 & 85.4 & / & 46.7 & 51.9  \\
MiSLAS (CVPR21) & /& 82.1 & 85.7 & / & 47.0 & 52.3  \\
RIDE-3 experts (ICLR21) & / & 81.6 & 84.0 & /& 48.6 & 51.4  \\
RBL (ICML23) & 81.2 & 84.7 & 87.6 & 48.9 & 53.1 & 57.2 \\
INC-DRW (AISTATS23) & 75.8 & 81.9 & 82.7 & 42.5 & 48.6 & 51.7 \\
DisA (ICML24) & 69.2 & 74.7 & 78.6 & 39.8 & 44.5 & 49.6 \\
IP-DPP (NeurIPS25) & / & 76.4 & / & / & 52.4 & / \\
B-SCL (NeurIPS25) & 84.1 & 88.0 & / & 51.0 & 56.4 & / \\
FeatRecon (ICLR25) & / & 85.2 & 87.8 & / & 52.5 & 57.0 \\
SEL (ICCV25) & / & 75.8 & 79.8 & / & 44.5 & 49.6 \\
FedYoYo (ICCV25) & / & 81.5 & 83.9 & / & 46.1 & 50.8 \\
Focal-SAM (ICML25) & 71.7 & 77.2 & 82.0 & 39.6 & 44.0 & 48.1 \\

\midrule
CE* & 69.1 & 76.0 & 81.1 & 36.8 & 41.6 & 45.5 \\
%CE*+\textbf{LR} & 72.3 & 77.9 & 82.9 & 39.5 & 43.6 & 48.4 \\
 CE*+\textbf{Loss (Ours)} & 78.6 & 82.3 & 85.7 & 43.1 & 47.9 & 52.6 \\
CE*+\textbf{Loss}+\textbf{LR (Ours)} & 76.8 & 82.0 & 85.3 & 43.9 & 48.9 & 52.8 \\

\midrule
CMO (CVPR22) & 69.6* & 74.8* & 80.0* & 39.0 & 43.9 & 48.3 \\
%CMO+\textbf{LR} & 71.8 & 77.1 & 82.1 & 39.9 & 44.9 & 49.3 \\
CMO+\textbf{Loss (Ours)}  & 76.7 & 82.0 & 85.4 & 43.2 & 46.8 & 52.3 \\
CMO+\textbf{Loss}+\textbf{LR (Ours)}  & 77.2 & 81.6 & 85.3 & 43.8 & 48.8 & 52.7 \\

\midrule
ETF-DR (NeurIPS22) & 71.9 & 76.5 & 81.0 & 40.9 & 45.3 & 50.4 \\
ETF-DR+DisA & 73.7 & 78.5 & 81.4 & 41.5 & 45.9 & 51.0 \\
%ETF-DR+\textbf{LR} & 71.7 & 77.8 & 81.6 & 42.5 & 47.0 & 52.1 \\
ETF-DR+\textbf{Loss (Ours)} & 72.5 & 77.9 & 81.4 & 42.2 & 47.0 & 51.4 \\
ETF-DR+\textbf{Loss}+\textbf{LR (Ours)} & 72.3 & 77.9 & 81.8 & 43.4 & 48.0 & 52.9 \\

\midrule
GLMC (CVPR23) & 83.4*& 87.8 & 90.2  & 50.8* & 55.9 & 61.1 \\
GLMC+SEL & / & 85.4 & 88.6 & / & 56.5 & 58.9 \\
GLMC+\textbf{Loss (Ours)} &83.8& 88.6 & 90.8  &52.0 & 58.3 &63.6   \\
GLMC+\textbf{Loss}+\textbf{LR (Ours)} & 84.0 & 89.1 & 91.1 & 52.4 & 58.6 & 63.9 \\

\bottomrule
\end{tabular}
}
\end{table}

\paragraph{ImageNet-LT and iNaturalist}
We further evaluate our proposed method on a large-scale dataset, ImageNet-LT \cite{imagenet} and iNaturalist \cite{van2018inaturalist}, and report the results in Table~\ref{imagenet} and Table~\ref{inat2018}. We evaluate two settings: using the proposed loss reweighting strategy alone and using it together with our proposed loss-reweighting method. On the standard cross-entropy baseline, the accuracy reaches 46.5\% on ImageNet-LT and 65.8\% on iNaturalist by using both components. When applied to the strong baseline GLMC, the accuracy becomes 57.6\% on ImageNet-LT and 74.8\% on iNaturalist with both the learning rate schedule and the proposed reweighting method. Overall, our method achieves consistent gains across different baselines and attains the SOTA performance on both datasets.

\begin{table}[t]
  \centering
  \caption{Experiment results on ImageNet-LT. Results marked with * are reproduced by us under the same experimental protocol.}
  \label{imagenet}
  \footnotesize
  \setlength{\tabcolsep}{3pt}        % 默认6pt左右，改小会更紧凑
  \renewcommand{\arraystretch}{0.9} % 行距压一点
  \resizebox{0.61\linewidth}{!}{     % 关键：不要占满整行
    \begin{tabular}{lcccc}
    \toprule
    \multirow{2}{*}{\textbf{Method}} & \multicolumn{4}{c}{\textbf{ImageNet-LT}} \\
    \cmidrule(lr){2-5}
    & \textbf{Many} & \textbf{Med} & \textbf{Few} & \textbf{All} \\
    \midrule
    KCL (ICLR21)              & 61.8 & 49.4 & 30.9 & 51.5 \\
    TSC (CVPR22)              & 63.5 & 49.7 & 30.4 & 52.4 \\
    MiSLAS (CVPR21)           & 61.7 & 51.3 & 35.8 & 52.7 \\
    RIDE-3 experts (ICLR21)   & 66.2 & 51.7 & 34.9 & 54.9 \\
    RBL (ICML23)              & 64.8 & 49.6 & 34.2 & 53.3 \\
    INC-DRW (AISTATS23)       & 67.1 & 49.7 & 29.0 & 53.9 \\
    DisA (ICML24)             & 67.7 & 38.6 & 7.3 & 44.8 \\
    IP-DPP (NeurIPS25) & 59.7 & 50.8 & 32.4 & 51.7 \\
    FeatRecon (ICLR25) & / & / & / & 56.8 \\
    SEL (ICCV25)       & 64.1 & 38.4 & 31.3 & 47.9 \\
    FedYoYo (ICCV25)   & 42.1 & 25.8 & 15.4 & 31.4 \\
    Focal-SAM (ICML25) & 63.9 & 52.2 & 34.4 & 54.3 \\
    \midrule
    CE*                & 67.0 & 36.9 & 4.6 & 44.1 \\
    CE*+\textbf{Loss (Ours)}   & 66.8 & 40.7 & 8.3 & 46.4 \\
    CE*+\textbf{Loss}+\textbf{LR (Ours)} & 66.4 & 41.0 & 8.8 & 46.5 \\
    \midrule
    CMO (CVPR22)              & 67.0 & 42.3 & 20.5 & 49.1 \\
    CMO+\textbf{Loss (Ours)}  & 67.4 & 44.1 & 22.1 & 50.3 \\
    CMO+\textbf{Loss}+\textbf{LR (Ours)}  & 65.8 & 44.6 & 22.8 & 49.8 \\
    \midrule
    ETF-DR (NeurIPS23)           & / & / & / & 44.7 \\
    ETF-DR+DisA        & 65.2 & 39.9 & 12.8 & 45.3 \\
    ETF-DR+\textbf{Loss (Ours)} & 63.8 & 40.5 & 13.6 & 45.8 \\
    ETF-DR+\textbf{Loss}+\textbf{LR (Ours)}  & 63.6   & 42.1 &14.4& 46.6 \\
    \midrule
    GLMC (CVPR23)             & 70.1 & 52.4 & 30.4 & 56.3 \\
    GLMC+SEL           & 68.7 & 54.4 & 38.3 & 57.2 \\
    GLMC+\textbf{Loss (Ours)}  & 67.2 & 55.6 & 38.8 & 57.6 \\
    GLMC+\textbf{Loss}+\textbf{LR (Ours)}  & 67.0 & 55.9 & 39.0 & 57.6 \\
\bottomrule
\end{tabular}
}
\end{table}

\subsection{Mechanism Analysis}
In this subsection, we further analyze the proposed inverse-view reweighting strategy in long-tailed learning.

\paragraph{t-SNE Visualization} Guided by the symmetric simplex ETF geometry, we set equal per-class average loss as an ideal objective and solve reweighting from an inverse perspective. This yields a closed-form per-class solution that balances the training loss between head and tail classes and brings the training dynamics closer to Neural Collapse (NC). As a result, the NC-consistent geometry leads to noticeably more compact within-class clusters and clearer inter-class separation. As shown in Figure~\ref{fig:t-sne}, we can see that our method yields tighter clusters and reduced overlap between head and tail classes compared with the cross-entropy baseline. This evidence directly supports our goal of reducing class imbalance by enforcing equal per-class average loss, approaching NC geometry under long-tailed imbalance.

\paragraph{Ablation}
Table~\ref{tab:ablation_micro_macro_cifar100lt_if100} further validates the roles of our two reweighting components. We conduct an ablation experiment on CIFAR-100-LT with the imbalance factor of 100. The batch-wise inverse reweighting yields 46.7\%, indicating that the batch-level closed-form reweighting effectively balances per-class loss within each mini-batch. Moreover, using only the macro-level reweighting also improves performance to 45.4\%, highlighting the value of reducing the class imbalance that builds during training. Combining batch-wise inverse reweighting and macro-level reweighting achieves 47.9\%, showing that the two components are complementary.

\begin{figure}[t]
  \centering

  \begin{subfigure}[b]{0.7\linewidth}
    \centering
    \includegraphics[width=\linewidth]{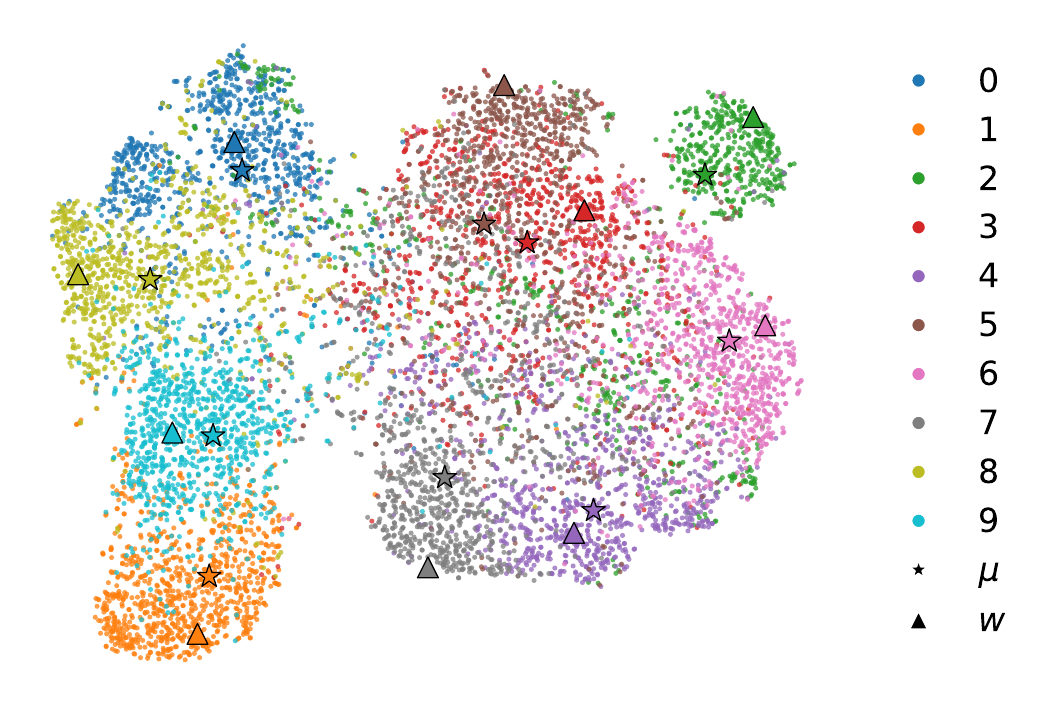} 
    \caption{Baseline}
    \label{fig:baseline}
  \end{subfigure}\hfill
  \begin{subfigure}[b]{0.7\linewidth}
    \centering
    \includegraphics[width=\linewidth]{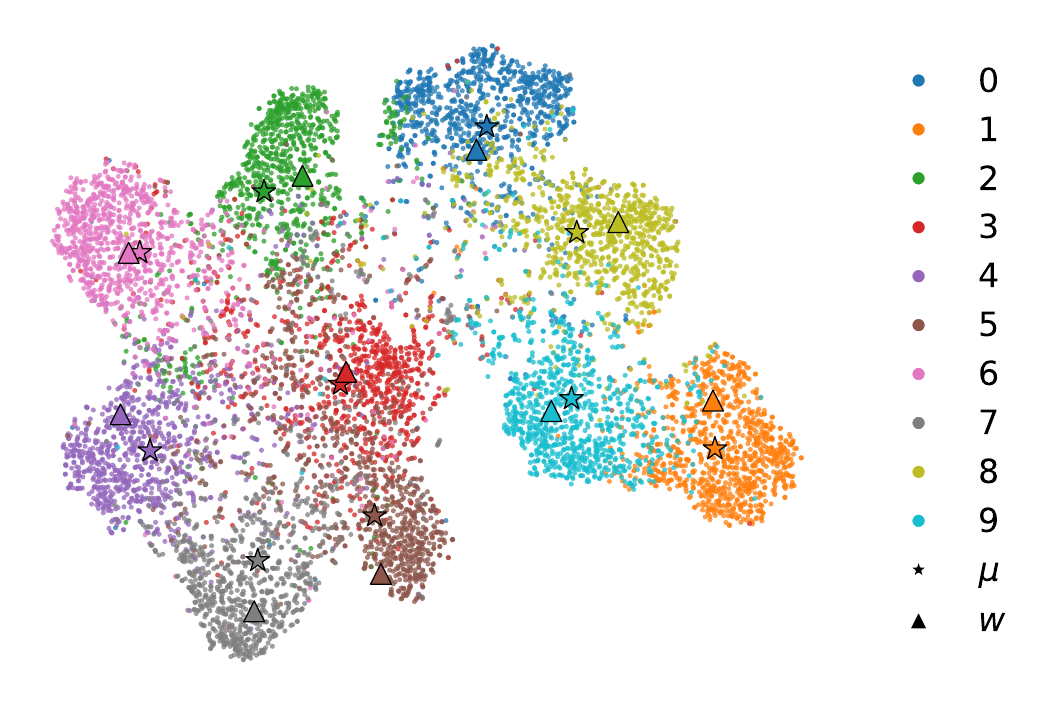}
    \caption{Ours}
    \label{fig:ours}
  \end{subfigure}

  \caption{t-SNE visualizations of the cross-entropy baseline and our proposed reweighting on the CIFAR-10-LT with the imbalance factor of 50. Different dots denote the features, $\star$ and $\blacktriangle$ express the class means and classifier weights for each class, respectively.}
  \label{fig:t-sne}
\end{figure}

\section{Related Work}

\textbf{Long-Tailed Learning.}
%%%%第一段总结长尾， 相关工作
In real-world applications, data often follow a long-tailed distribution, where a few head classes contain abundant samples while most tail classes are severely underrepresented. Long-tailed learning aims to achieve good performance on both head and tail classes. Existing methods can be categorized into three groups: class rebalancing, data augmentation, and model-level improvements.  Early approaches adopt oversampling of tail classes \cite{over_sampling} or under-sampling of head classes \cite{under_sampling}, as well as various reweighting schemes that explicitly adjust class-wise \cite{CB_loss} or instance-wise \cite{focal_loss} loss weights to prevent head classes from dominating training. And \cite{GLMC} \cite{generation_models} employ stronger data augmentation strategies or design generative models to synthesize new samples. Some model-level methods further improve training by decoupling learning \cite{decoupling_learning}, calibrating logits \cite{logit_adjusment}, or classifier weights \cite{classifier_weights}.

\begin{table}[t]
  \centering
  \caption{Experiment results on iNaturalist. Results marked with * are reproduced by us under the same experimental protocol.}
  \label{inat2018}
  \footnotesize
  \setlength{\tabcolsep}{4pt}
  \renewcommand{\arraystretch}{0.95}
  \resizebox{0.57\linewidth}{!}{
    \begin{tabular}{lc}
    \toprule
    \multirow{2}{*}{\textbf{Method}} & \textbf{iNaturalist 2018} \\
    \cmidrule(lr){2-2}
    & \textbf{Accuracy} \\
    \midrule
    BBN (CVPR20)       & 66.3 \\
    KCL (ICLR21)       & 68.6 \\
    TSC (CVPR22)       & 69.7 \\
    MiSLAS (AISTATS23) & 71.6 \\
    RIDE-3 experts (ICLR21)   & 72.2 \\
    IP-DPP (NeurIPS25) & 74.0 \\
    FeatRecon (ICLR25) & 72.9 \\
    SEL (ICCV25)       & 69.1 \\
    Focal-SAM (ICML25) & 71.8 \\
    \midrule
    CE*                   & 63.5 \\
    CE*+\textbf{Loss (Ours)} & 66.3 \\
    CE*+\textbf{Loss}+\textbf{LR (Ours)} & 65.8 \\
    \midrule
    CMO  (CVPR22)          & 64.5* \\
    CMO+\textbf{Loss (Ours)}   & 65.6 \\
    CMO+\textbf{Loss}+\textbf{LR (Ours)} & 65.1 \\
    \midrule
    GLMC  (CVPR23)          & 72.2 \\
    GLMC+SEL              & 74.5 \\
    GLMC+\textbf{Loss (Ours)} & 74.9 \\
    GLMC+\textbf{Loss}+\textbf{LR (Ours)} & 74.8 \\
    \bottomrule
    \end{tabular}
  }
\end{table}

%%%%%第二段： reweighting （先总结每个干了什么，后总结问题，他们没有一个理想目标）
Focusing on reweighting-based approaches, some existing reweighting methods tackle long-tailed recognition by adjusting class-wise or instance-wise loss weights to address the long-tailed distribution. Some methods use class frequency for reweighting like Inv-Freq and Inv-Sqrt apply the inverse of the sample number, and CB Loss \cite{CB_loss} computes the weights by the effective number of samples. And there are also some instance-wise reweighting methods that depend on the sample's influence, such as Focal Loss \cite{focal_loss} and IB Loss \cite{IB_loss}. In addition, Range Loss \cite{range_loss} rebalances instances in the feature space, and two-component reweighting (TCR) \cite{TCR_loss} applies meta-learning to rebalance long-tailed settings. Beyond frequency or heuristic-based designs, recent studies \cite{gradient_reweighting,learning_sample_reweighting,softadapt} also explore learning-based or dynamic weighting rules that adapt weights according to training dynamics or optimization signals, such as gradient-driven reweighting and bi-level sample reweighting, as well as general adaptive loss-weighting schemes for multi-part objectives.

%However, all the above methods only focus on how to rebalance the long-tailed data or just maximize the alignment of the source and target distribution, without explicitly answering a more fundamental question: what does an ideal reweighting scheme look like for long-tailed learning? In other words, reweighting is largely treated as a heuristic mechanism rather than being derived from a clearly defined target objective. 

\textbf{Neural Collapse}
%%%%%%%先说NC NC相关理论， 提一下NC-Inspired 长尾工作 （ETF-DR, DisA, RBL, ARB_LOSS）
Neural Collapse (NC) \cite{Neural_Collapse} is a geometric phenomenon observed in standard balanced classification problems at the final stage of training, which is commonly interpreted as an idealized training objective for the learning problem. Under NC, the class means approximately form a simplex Equiangular Tight Frame (ETF) on a hypersphere, and the classifier weight vectors align with the corresponding class means, leading to an ideal highly symmetric feature-classifier configuration (NC1--NC4). Recent theoretical studies further characterize how NC geometry changes under class imbalance in unconstrained feature models with cross-entropy loss \cite{NC_unconstrained_feature,reweighting_affects_NC_geometry}, revealing skewed class-mean structures and imbalance-dependent scaling behaviors. Meanwhile, alternative imbalanced NC structures have also been investigated, such as neural collapse to multiple centers \cite{NC_multi_centers}. Inspired by NC, \cite{ETF-DR} fixes the last-layer classifier as a simplex ETF and proposes the Dot-Regression (ETF-DR) loss and the Representation-Balanced Learning (RBL) framework \cite{RBL} further introduces orthogonal matrices to register the sample features and balanced features while the ETF geometric structure is preserved. In addition, \cite{ARB_loss} proposes the Attraction-Repulsion-Balanced (ARB) Loss to balance the gradients among components from different classes. \cite{DisA} proposed Distribution Alignments Optimization (DisA), an OT-based regularizer that matches the distribution of last-layer features to an ideal balanced ETF distribution. \cite{wang2026space} analyzes the consequence of space misalignment of NC through large deviation theory. 
%Building on this observation, we take neural collapse as a guiding geometry and explicitly formulate an ideal reweighting objective for long-tailed learning. 

\begin{table}[t]
  \centering
  \caption{Ablation study}
  \label{tab:ablation_micro_macro_cifar100lt_if100}
  \footnotesize
  \setlength{\tabcolsep}{7pt}
  \renewcommand{\arraystretch}{1.05}
  \resizebox{1.0\linewidth}{!}{
  \begin{tabular}{lccc}
    \toprule
    \multirow{2}{*}{\textbf{Setting}} & \multicolumn{2}{c}{\textbf{Reweighting}} & \textbf{CIFAR-100-LT} \\
    \cmidrule(lr){2-3}\cmidrule(lr){4-4}
    & \textbf{Batch-wise} & \textbf{Macro-level} & \textbf{IF=100} \\
    \midrule
    CE (w/o reweighting) &  &  & 41.6 \\
    CE + Batch-wise inverse       & $\checkmark$ &  & 46.7 \\
    CE + Macro-level reweighting              &  & $\checkmark$ & 45.4 \\
    CE + Batch-wise + Macro-level             & $\checkmark$ & $\checkmark$ & 47.9 \\
    \bottomrule
  \end{tabular}}
\end{table}

\section{Conclusion} 
In this paper, we revisit loss reweighting methods for long-tailed learning and propose an explicit class-wise target of equal per-class average loss, motivated by Neural Collapse (NC). Based on this objective, we formulate reweighting as an inverse problem and derive a closed-form per-class solution for loss reweighting. Extensive experiments show that our method exhibits strong generalization performance while better recovering the ideal NC geometry.

\section*{Impact Statement}
This paper presents work whose goal is to advance the field
of Machine Learning. There are many potential societal
consequences of our work, none which we feel must be
specifically highlighted here.

\section*{Acknowledgments}
The work was partially supported by the following: 
the Zhejiang Provincial Natural Science Foundation – Exploration Project under No. LMS26F020007,
the Wenzhou Applied Fundamental Research Program (Basic Research) under No. GG20250198,
the WKU 2026 International Frontier Interdisciplinary Research Institute Talent Program under No. WKUTP2026002,
the WKU 2025 International Collaborative Research Program under No. ICRPSP2025001.

\clearpage
% In the unusual situation where you want a paper to appear in the
% references without citing it in the main text, use \nocite

\bibliography{example_paper}

@inproceedings{BBN,
  title={Bbn: Bilateral-branch network with cumulative learning for long-tailed visual recognition},
  author={Zhou, Boyan and Cui, Quan and Wei, Xiu-Shen and Chen, Zhao-Min},
  booktitle={Proceedings of the IEEE/CVF conference on computer vision and pattern recognition},
  pages={9719--9728},
  year={2020}
}

@article{RIDE,
  title={Long-tailed recognition by routing diverse distribution-aware experts},
  author={Wang, Xudong and Lian, Long and Miao, Zhongqi and Liu, Ziwei and Yu, Stella X},
  journal={arXiv preprint arXiv:2010.01809},
  year={2020}
}

@inproceedings{wang2026space,
  title={Space Alignment Matters: The Missing Piece for Inducing Neural Collapse in Long-Tailed Learning},
  author={Wang, Jinping and Gao, Zhiqiang and Xie, Zhiwu},
  booktitle={Proceedings of the AAAI Conference on Artificial Intelligence},
  volume={40},
  number={31},
  pages={26302--26309},
  year={2026}
}

@inproceedings{HCL,
  title={Contrastive learning based hybrid networks for long-tailed image classification},
  author={Wang, Peng and Han, Kai and Wei, Xiu-Shen and Zhang, Lei and Wang, Lei},
  booktitle={Proceedings of the IEEE/CVF conference on computer vision and pattern recognition},
  pages={943--952},
  year={2021}
}

@inproceedings{KCL,
  title={Exploring balanced feature spaces for representation learning},
  author={Kang, Bingyi and Li, Yu and Xie, Sa and Yuan, Zehuan and Feng, Jiashi},
  booktitle={International conference on learning representations},
  year={2020}
}

@inproceedings{TSC,
  title={Targeted supervised contrastive learning for long-tailed recognition},
  author={Li, Tianhong and Cao, Peng and Yuan, Yuan and Fan, Lijie and Yang, Yuzhe and Feris, Rogerio S and Indyk, Piotr and Katabi, Dina},
  booktitle={Proceedings of the IEEE/CVF conference on computer vision and pattern recognition},
  pages={6918--6928},
  year={2022}
}

@inproceedings{MiSLAR,
  title={Improving calibration for long-tailed recognition},
  author={Zhong, Zhisheng and Cui, Jiequan and Liu, Shu and Jia, Jiaya},
  booktitle={Proceedings of the IEEE/CVF conference on computer vision and pattern recognition},
  pages={16489--16498},
  year={2021}
}

@inproceedings{RBL,
  title={Feature directions matter: Long-tailed learning via rotated balanced representation},
  author={Peifeng, Gao and Xu, Qianqian and Wen, Peisong and Yang, Zhiyong and Shao, Huiyang and Huang, Qingming},
  booktitle={International Conference on Machine Learning},
  pages={27542--27563},
  year={2023},
  organization={PMLR}
}

@inproceedings{INC,
  title={Inducing neural collapse in deep long-tailed learning},
  author={Liu, Xuantong and Zhang, Jianfeng and Hu, Tianyang and Cao, He and Yao, Yuan and Pan, Lujia},
  booktitle={International conference on artificial intelligence and statistics},
  pages={11534--11544},
  year={2023},
  organization={PMLR}
}

@article{IP-DPP,
  title={Long-Tailed Recognition via Information-Preservable Two-Stage Learning},
  author={Lin, Fudong and Yuan, Xu},
  journal={arXiv preprint arXiv:2510.08836},
  year={2025}
}

@article{B-SCL,
  title={Deciphering the Extremes: A Novel Approach for Pathological Long-tailed Recognition in Scientific Discovery},
  author={Zhao, Zhe and Wen, HaiBin and Liu, Xianfu and Mao, Rui and Wang, Pengkun and Yu, Liheng and Chen, Linjiang and An, Bo and Zhang, Qingfu and Wang, Yang},
  journal={Advances in Neural Information Processing Systems},
  volume={38},
  pages={139640--139662},
  year={2026}
}

@inproceedings{FeatRecon,
  title={Geometry of Long-Tailed Representation Learning: Rebalancing Features for Skewed Distributions},
  author={Yi, Lingjie and Yao, Jiachen and Lyu, Weimin and Ling, Haibin and Douady, Raphael and Chen, Chao},
  booktitle={The Thirteenth International Conference on Learning Representations},
  year={2025}
}

@InProceedings{SEL,
    author    = {Jian, Zhongquan and Chen, Yanhao and Wang, Yancheng and Yao, Junfeng and Wang, Meihong and Wu, Qingqiang},
    title     = {Supervised Exploratory Learning for Long-Tailed Visual Recognition},
    booktitle = {Proceedings of the IEEE/CVF International Conference on Computer Vision (ICCV)},
    month     = {October},
    year      = {2025},
    pages     = {1870-1880}
}

@article{FedYoYo,
  title={You Are Your Own Best Teacher: Achieving Centralized-level Performance in Federated Learning under Heterogeneous and Long-tailed Data},
  author={Yan, Shanshan and Li, Zexi and Wu, Chao and Pang, Meng and Lu, Yang and Yan, Yan and Wang, Hanzi},
  journal={arXiv preprint arXiv:2503.06916},
  year={2025}
}

@article{Focal-SAM,
  title={Focal-SAM: Focal Sharpness-Aware Minimization for Long-Tailed Classification},
  author={Li, Sicong and Xu, Qianqian and Yang, Zhiyong and Wang, Zitai and Zhang, Linchao and Cao, Xiaochun and Huang, Qingming},
  journal={arXiv preprint arXiv:2505.01660},
  year={2025}
}

@inproceedings{CMO,
  title={The majority can help the minority: Context-rich minority oversampling for long-tailed classification},
  author={Park, Seulki and Hong, Youngkyu and Heo, Byeongho and Yun, Sangdoo and Choi, Jin Young},
  booktitle={Proceedings of the IEEE/CVF conference on computer vision and pattern recognition},
  pages={6887--6896},
  year={2022}
}

@article{ETF-DR,
  title={Inducing neural collapse in imbalanced learning: Do we really need a learnable classifier at the end of deep neural network?},
  author={Yang, Yibo and Chen, Shixiang and Li, Xiangtai and Xie, Liang and Lin, Zhouchen and Tao, Dacheng},
  journal={Advances in neural information processing systems},
  volume={35},
  pages={37991--38002},
  year={2022}
}

@inproceedings{GLMC,
  title={Global and local mixture consistency cumulative learning for long-tailed visual recognitions},
  author={Du, Fei and Yang, Peng and Jia, Qi and Nan, Fengtao and Chen, Xiaoting and Yang, Yun},
  booktitle={Proceedings of the IEEE/CVF conference on computer vision and pattern recognition},
  pages={15814--15823},
  year={2023}
}

@inproceedings{CIFAR10,
  title={Class-balanced loss based on effective number of samples},
  author={Cui, Yin and Jia, Menglin and Lin, Tsung-Yi and Song, Yang and Belongie, Serge},
  booktitle={Proceedings of the IEEE/CVF conference on computer vision and pattern recognition},
  pages={9268--9277},
  year={2019}
}

@InProceedings{ResNet32,
author = {He, Kaiming and Zhang, Xiangyu and Ren, Shaoqing and Sun, Jian},
title = {Deep Residual Learning for Image Recognition},
booktitle = {Proceedings of the IEEE Conference on Computer Vision and Pattern Recognition (CVPR)},
month = {June},
year = {2016}
}

@inproceedings{imagenet,
  title={Imagenet: A large-scale hierarchical image database},
  author={Deng, Jia and Dong, Wei and Socher, Richard and Li, Li-Jia and Li, Kai and Fei-Fei, Li},
  booktitle={2009 IEEE conference on computer vision and pattern recognition},
  pages={248--255},
  year={2009},
  organization={Ieee}
}

@inproceedings{van2018inaturalist,
  title={The inaturalist species classification and detection dataset},
  author={Van Horn, Grant and Mac Aodha, Oisin and Song, Yang and Cui, Yin and Sun, Chen and Shepard, Alex and Adam, Hartwig and Perona, Pietro and Belongie, Serge},
  booktitle={Proceedings of the IEEE conference on computer vision and pattern recognition},
  pages={8769--8778},
  year={2018}
}

@article{Neural_Collapse,
  title={Prevalence of neural collapse during the terminal phase of deep learning training},
  author={Papyan, Vardan and Han, XY and Donoho, David L},
  journal={Proceedings of the National Academy of Sciences},
  volume={117},
  number={40},
  pages={24652--24663},
  year={2020},
  publisher={National Academy of Sciences}
}

@inproceedings{CB_loss,
  title={Class-balanced loss based on effective number of samples},
  author={Cui, Yin and Jia, Menglin and Lin, Tsung-Yi and Song, Yang and Belongie, Serge},
  booktitle={Proceedings of the IEEE/CVF conference on computer vision and pattern recognition},
  pages={9268--9277},
  year={2019}
}

@inproceedings{focal_loss,
  title={Focal loss for dense object detection},
  author={Lin, Tsung-Yi and Goyal, Priya and Girshick, Ross and He, Kaiming and Doll{\'a}r, Piotr},
  booktitle={Proceedings of the IEEE international conference on computer vision},
  pages={2980--2988},
  year={2017}
}

@inproceedings{TCR_loss,
  title={Rethinking class-balanced methods for long-tailed visual recognition from a domain adaptation perspective},
  author={Jamal, Muhammad Abdullah and Brown, Matthew and Yang, Ming-Hsuan and Wang, Liqiang and Gong, Boqing},
  booktitle={Proceedings of the IEEE/CVF conference on computer vision and pattern recognition},
  pages={7610--7619},
  year={2020}
}

@inproceedings{IB_loss,
  title={Influence-balanced loss for imbalanced visual classification},
  author={Park, Seulki and Lim, Jongin and Jeon, Younghan and Choi, Jin Young},
  booktitle={Proceedings of the IEEE/CVF international conference on computer vision},
  pages={735--744},
  year={2021}
}

@inproceedings{range_loss,
  title={Range loss for deep face recognition with long-tailed training data},
  author={Zhang, Xiao and Fang, Zhiyuan and Wen, Yandong and Li, Zhifeng and Qiao, Yu},
  booktitle={Proceedings of the IEEE international conference on computer vision},
  pages={5409--5418},
  year={2017}
}

@inproceedings{under_sampling,
  title={C4. 5, class imbalance, and cost sensitivity: why under-sampling beats over-sampling},
  author={Drummond, Chris and Holte, Robert C and others},
  booktitle={Workshop on learning from imbalanced datasets II},
  volume={11},
  number={1-8},
  year={2003}
}

@inproceedings{over_sampling,
  title={Borderline-SMOTE: a new over-sampling method in imbalanced data sets learning},
  author={Han, Hui and Wang, Wen-Yuan and Mao, Bing-Huan},
  booktitle={International conference on intelligent computing},
  pages={878--887},
  year={2005},
  organization={Springer}
}

@inproceedings{generation_models,
  title={Feature generation for long-tail classification},
  author={Vigneswaran, Rahul and Law, Marc T and Balasubramanian, Vineeth N and Tapaswi, Makarand},
  booktitle={Proceedings of the twelfth Indian conference on computer vision, graphics and image processing},
  pages={1--9},
  year={2021}
}

@article{decoupling_learning,
  title={Decoupling representation and classifier for long-tailed recognition},
  author={Kang, Bingyi and Xie, Saining and Rohrbach, Marcus and Yan, Zhicheng and Gordo, Albert and Feng, Jiashi and Kalantidis, Yannis},
  journal={arXiv preprint arXiv:1910.09217},
  year={2019}
}

@article{logit_adjusment,
  title={Long-tail learning via logit adjustment},
  author={Menon, Aditya Krishna and Jayasumana, Sadeep and Rawat, Ankit Singh and Jain, Himanshu and Veit, Andreas and Kumar, Sanjiv},
  journal={arXiv preprint arXiv:2007.07314},
  year={2020}
}

@inproceedings{classifier_weights,
  title={Long-tailed recognition via weight balancing},
  author={Alshammari, Shaden and Wang, Yu-Xiong and Ramanan, Deva and Kong, Shu},
  booktitle={Proceedings of the IEEE/CVF conference on computer vision and pattern recognition},
  pages={6897--6907},
  year={2022}
}

@article{ARB_loss,
  title={Neural collapse inspired attraction--repulsion-balanced loss for imbalanced learning},
  author={Xie, Liang and Yang, Yibo and Cai, Deng and He, Xiaofei},
  journal={Neurocomputing},
  volume={527},
  pages={60--70},
  year={2023},
  publisher={Elsevier}
}

@inproceedings{DisA,
  title={Distribution alignment optimization through neural collapse for long-tailed classification},
  author={Gao, Jintong and Zhao, He and dan Guo, Dan and Zha, Hongyuan},
  booktitle={Forty-first International Conference on Machine Learning},
  year={2024}
}

@article{AlexNet,
  title={Imagenet classification with deep convolutional neural networks},
  author={Krizhevsky, Alex and Sutskever, Ilya and Hinton, Geoffrey E},
  journal={Advances in neural information processing systems},
  volume={25},
  year={2012}
}

@article{nc_metrics,
  title={Are all losses created equal: A neural collapse perspective},
  author={Zhou, Jinxin and You, Chong and Li, Xiao and Liu, Kangning and Liu, Sheng and Qu, Qing and Zhu, Zhihui},
  journal={Advances in Neural Information Processing Systems},
  volume={35},
  pages={31697--31710},
  year={2022}
}

@article{reweighting_affects_NC_geometry,
  title={Neural collapse for cross-entropy class-imbalanced learning with unconstrained relu feature model},
  author={Dang, Hien and Tran, Tho and Nguyen, Tan and Ho, Nhat},
  journal={arXiv preprint arXiv:2401.02058},
  year={2024}
}

@article{NC_multi_centers,
  title={Neural collapse to multiple centers for imbalanced data},
  author={Yan, Hongren and Qian, Yuhua and Peng, Furong and Luo, Jiachen and Li, Feijiang and others},
  journal={Advances in Neural Information Processing Systems},
  volume={37},
  pages={65583--65617},
  year={2024}
}

@article{NC_unconstrained_feature,
  title={Neural collapse for unconstrained feature model under cross-entropy loss with imbalanced data},
  author={Hong, Wanli and Ling, Shuyang},
  journal={arXiv preprint arXiv:2309.09725},
  year={2023}
}

@inproceedings{gradient_reweighting,
  title={Gradient reweighting: Towards imbalanced class-incremental learning},
  author={He, Jiangpeng},
  booktitle={Proceedings of the IEEE/CVF Conference on Computer Vision and Pattern Recognition},
  pages={16668--16677},
  year={2024}
}

@article{learning_sample_reweighting,
  title={Learning sample reweighting for accuracy and adversarial robustness},
  author={Holtz, Chester and Weng, Tsui-Wei and Mishne, Gal},
  journal={arXiv preprint arXiv:2210.11513},
  year={2022}
}

@article{softadapt,
  title={Softadapt: Techniques for adaptive loss weighting of neural networks with multi-part loss functions},
  author={Heydari, A Ali and Thompson, Craig A and Mehmood, Asif},
  journal={arXiv preprint arXiv:1912.12355},
  year={2019}
}

@article{NC1,
  title={Neural collapse for cross-entropy class-imbalanced learning with unconstrained relu feature model},
  author={Dang, Hien and Tran, Tho and Nguyen, Tan and Ho, Nhat},
  journal={arXiv preprint arXiv:2401.02058},
  year={2024}
}
\bibliographystyle{icml2026}

%%%%%%%%%%%%%%%%%%%%%%%%%%%%%%%%%%%%%%%%%%%%%%%%%%%%%%%%%%%%%%%%%%%%%%%%%%%%%%%
%%%%%%%%%%%%%%%%%%%%%%%%%%%%%%%%%%%%%%%%%%%%%%%%%%%%%%%%%%%%%%%%%%%%%%%%%%%%%%%
% APPENDIX
%%%%%%%%%%%%%%%%%%%%%%%%%%%%%%%%%%%%%%%%%%%%%%%%%%%%%%%%%%%%%%%%%%%%%%%%%%%%%%%
%%%%%%%%%%%%%%%%%%%%%%%%%%%%%%%%%%%%%%%%%%%%%%%%%%%%%%%%%%%%%%%%%%%%%%%%%%%%%%%
\newpage
\appendix
\onecolumn
% \section{Code}
% \url{https://anonymous.4open.science/r/LONGTAILREWEIGHTING-6C0D/README.md}

\section{Reweighted Loss Methods} \label{reweighted}

In this section, we summarize commonly used loss reweighting methods in long-tailed classification. We present the class weighting used by each method and provide the corresponding loss functions.

%\subsection{Standard Cross-entropy Loss}
%Using the notation introduced before, let $N = \sum_{c=1}^{C} n_c$ denote the total number of training samples. The standard cross-entropy loss over the entire training set is defined as 
%\[
%L_{\mathrm{CE}}(W)
%= \frac{1}{N} \sum_{c=1}^{C} \sum_{i=1}^{n_c} 
%\ell(W, h(x_{i,c}), c)
%= - \frac{1}{N} \sum_{c=1}^{C} \sum_{i=1}^{n_c} 
%\log p_c(x_{i,c}; W).
%\]

\subsection{Frequency-based Reweighting}

A common and simple reweighting strategy assigns each class a loss weight inversely proportional to its sample numbers. Let $n_c$ denote the number of
samples in class $c$.

\noindent\textbf{Inverse-Frequency (Inv-Freq).}
The Inv-Freq weighting sets
\[
w_c^{\text{Inv-Freq}} = \frac{1}{n_c}.
\]
Using this class weight, the corresponding reweighted loss over the training
set is
\[
L_{\mathrm{}}(W)
= \frac{1}{N} \sum_{c=1}^{C} \sum_{i=1}^{n_c}
w_c^{\text{Inv-Freq}} \, \ell\bigl(W, h(x_{i,c}), c\bigr).
\]
This heuristic increases the contribution of minority classes while reducing the influence of majority classes.

\noindent\textbf{Inverse-Square-Root (Inv-Sqrt).}
To soften this effect, a smoother variant replaces the inverse with the inverse square root:
\[
w_c^{\text{Inv-Sqrt}} = \frac{1}{\sqrt{n_c}}.
\]
The corresponding reweighted loss becomes
\[
L_{\mathrm{}}(W)
= \frac{1}{N} \sum_{c=1}^{C} \sum_{i=1}^{n_c}
w_c^{\text{Inv-Sqrt}} \, \ell\bigl(W, h(x_{i,c}), c\bigr).
\]
where \(N = \sum_{c=1}^{C} n_c\) is the total number of training samples. In contrast to Inv-Freq, Inv-Sqrt achieves a more moderate form of rebalancing while avoiding excessively large weights for rare classes in long-tailed classification.

\subsection{Class-Balanced Loss}
The Class-Balanced (CB) loss \cite{CB_loss} replaces the raw class frequency $n_c$ by using its effective number of samples. For a class $c$ with $n_c$ training examples, the effective number is defined as 
\[
E_{n_c} = \frac{1 - \beta^{n_c}}{1 - \beta},
\]
where $\beta \in [0, 1)$ is a hyper-parameter controlling how aggressively head classes are down-weighted. The corresponding class weight is taken as the inverse effective number,
\[
w_c^{\text{CB}} = \frac{1}{E_{n_c}} =
\frac{1 - \beta}{1 - \beta^{n_c}}.
\]
Using these weights in our generic reweighted loss function, the resulting Class-Balanced loss is 
\[
L_{\mathrm{}}(W)
= \frac{1}{N} \sum_{c=1}^{C} \sum_{i=1}^{n_c}
w_c^{\text{CB}} \, \ell\bigl(W, h(x_{i,c}), c\bigr),
\]
where $N = \sum_{c=1}^{C} n_c$ denotes the total number of training samples. When $\beta = 0$, $w_c^{\text{CB}} = 1$ (no reweighting), and as $\beta \rightarrow 1$, $w_c^{\text{CB}}$ approaches the inverse-frequency weight $1/n_c$. 

\subsection{Focal Loss}
The Focal loss \cite{focal_loss} is designed to handle the severe foreground–background imbalance characteristic of dense prediction problems. While the standard cross-entropy loss assigns a similar magnitude of loss to both easy and hard examples, the Focal loss reshapes the loss so that well-classified examples receive significantly lower weights, encouraging the model to focus more on hard and informative examples. 

For a sample $(x_{i,c}, y_{i,c})$ belonging to class $c$, let $p_c(x_{i,c}; W)$ denote the predicted probability of the ground-truth class. The Focal loss introduces a modulating factor $(1 - p_t)^\gamma$ with a focusing parameter $\gamma \ge 0$ to determine how strongly the loss suppresses easy examples:
\[
\mathrm{FL}(x_{i,c}, c; W)
= - (1 - p_c(x_{i,c}; W))^\gamma \log p_c(x_{i,c}; W).
\]
To further account for class imbalance, an additional balancing factor $\alpha_t \in [0, 1]$ may be included, leading to the $\alpha$balanced version:
\[
\mathrm{FL}(x_{i,c}, c; W)
= - \alpha_c (1 - p_c(x_{i,c}; W))^\gamma \log p_c(x_{i,c}; W).
\]
When $\gamma = 0$, the loss reduces to standard cross-entropy. Increasing $\gamma$ strengthens the down-weighting of easy examples, causing the loss to concentrate on hard samples. 

Based on the above definition, the overall Focal loss on the training set can be written as
\[
L_{\mathrm{}}(W)
= \frac{1}{N} \sum_{c=1}^{C} \sum_{i=1}^{n_c}
\mathrm{FL}(x_{i,c}, c; W),
\]
where $N = \sum_{c=1}^{C} n_c$ denotes the total number of training samples.

\subsection{Influence-Balanced Loss}
The Influence-Balanced (IB) Loss \cite{IB_loss} aims to down-weight training samples that have a disproportionately large influence on the decision boundary. Following the notation we defined before, a sample $(x_{i,c}, y_{i,c})$ from class $c$, let $h(x_{i,c}) \in \mathbb{R}^p$ denote the last-layer feature, and let
\[
p(x_{i,c}; W) = \bigl(p_1(x_{i,c}; W), \dots, p_C(x_{i,c}; W)\bigr)^\top
\]
be the softmax probability vector. We denote by $y^{(c)} \in \{0,1\}^C$ the one-hot vector of class $c$. 

Follow \cite{IB_loss}, an influence-balanced weight factor for $(x_{i,c}, y_{i,c})$ is defined as
\[
\mathrm{IB}(x_{i,c}, c; W)
= \bigl\|\,p(x_{i,c}; W) - y^{(c)}\,\bigr\|_1
  \,\bigl\|\,h(x_{i,c})\,\bigr\|_1,
\]
and its inverse is used to attenuate highly influential samples.
The sample-wise IB loss then takes the form
\[
\ell_{\mathrm{IB}}\bigl(W, h(x_{i,c}), c\bigr)
= \frac{\ell\bigl(W, h(x_{i,c}), c\bigr)}
       {\mathrm{IB}(x_{i,c}, c; W) + \varepsilon},
\]
where $\ell(W, h(x_{i,c}), c)$ is the standard cross-entropy loss and $\varepsilon > 0$ is a small constant for numerical stability.

Using this definition, the overall IB Loss over the training set is
\[
L_{\mathrm{}}(W)
= \frac{1}{N} \sum_{c=1}^{C} \sum_{i=1}^{n_c}
\ell_{\mathrm{IB}}\bigl(W, h(x_{i,c}), c\bigr),
\]
where $N = \sum_{c=1}^{C} n_c$ denotes the total number of training samples.

To further compensate for inter-class imbalance, we introduce a class-wise coefficient $\lambda_c$ that is inversely proportional to the number of training samples in class $c$:
\[
\lambda_c = \frac{\alpha\, n_c^{-1}}{\sum_{c'=1}^{C} n_{c'}^{-1}},
\]
where $\alpha > 0$ is a scaling hyper-parameter. The resulting class-reweighted IB loss is
\[
L_{\mathrm{}}(W)
= \frac{1}{N} \sum_{c=1}^{C} \sum_{i=1}^{n_c}
\lambda_c\, \ell_{\mathrm{IB}}\bigl(W, h(x_{i,c}), c\bigr).
\]

\subsection{Range Loss}
Range Loss \cite{range_loss} is designed to simultaneously reduce intra-class variations and enlarge inter-class separations, particularly for long-tailed distributions. For each class in a mini-batch, let $\{h(x_{i,c})\}_{i=1}^{n_c}$ denote its feature vectors defined before.

\paragraph{Intra-Class Term.}

For class $c$, consider all pairwise Euclidean distances among $\{h(x_{i,c})\}_{i=1}^{n_c}$, 
\[
d^{(c)}_{i,j} = \bigl\|h(x_{i,c}) - h(x_{j,c})\bigr\|_2, 
\quad 1 \le i < j \le n_c .
\]
We sort these distances in descending order and denote by $\{D^{(c)}_j\}_{j=1}^k$ the $k$ largest ones. The intra-class range loss of class $c$ is defined as the harmonic mean of these $k$ ranges:
\[
L^{\text{intra}}_c =
\frac{k}{\displaystyle\sum_{j=1}^k \frac{1}{D^{(c)}_j}} .
\]
Let $\mathcal{I}$ be the set of classes that appear in the current mini-batch. The overall intra-class term is then 
\[
L_{\text{intra}} = \sum_{c \in \mathcal{I}} L^{\text{intra}}_c .
\]

\paragraph{Inter-Class Term.}
Let $m_c$ denotes the feature center of class $c$,
\[
m_c = \frac{1}{n_c} \sum_{i=1}^{n_c} h(x_{i,c}),
\]
and define $D_{\mathrm{center}}$ as the shortest center-to-center distance among all class pairs:
\[
D_{\text{center}} = \min_{c \neq c'} \bigl\|m_c - m_{c'}\bigr\|_2 .
\]
The inter-class range loss encourages class centers to be well separated by enforcing a margin on this minimum distance:
\[
L_{\text{inter}} = \max\bigl(M - D_{\text{center}},\, 0\bigr),
\]
where $M > 0$ is a margin hyper-parameter.

\paragraph{Overall Loss.}
Combining both terms yields the Range Loss:
\[
L_{\mathrm{Range}}
= \alpha L_{\mathrm{intra}}
+ \beta L_{\mathrm{inter}},
\]
where $\alpha$ and $\beta$ control the relative importance of the two components. Following \cite{range_loss}, Range Loss is typically used together with the standard softmax cross-entropy:
\[
L(W)
= L_{\mathrm{CE}}(W)
+ \lambda L_{\mathrm{Range}},
\]
where $\lambda$ is a balancing coefficient.

\subsection{Two-Component Reweighting Loss}
Two-component Reweighting (TCR) Loss \cite{TCR_loss} aims to minimize the expected risk under a target distribution that is more class-balanced than the long-tailed training distribution. By importance weighting, the target error can be written as an expectation under the training distribution with sample-wise weights. TCR decomposes this weight into a \emph{class-wise} component and an \emph{instance-wise} component.

\paragraph{Class-Wise Reweighting.} 
For class $c$, let $n_c$ be the number of training samples in this class. Follow the class-balanced loss \cite{CB_loss} based on the effective number of samples, a class-wise weight $w_c$ is defined as 
\[
  w_c = \frac{1-\beta}{1-\beta^{\,n_c}}, \qquad \beta \in [0,1).
\]
This term increases the contribution of tail classes (small $n_c$) and down-weights head classes (large $n_c$). 

\paragraph{Instance-Wise Reweighting.}
Beyond class frequencies, different instances within the same class may contribute differently to the target distribution. TCR introduces an additional instance-wise weight $\varepsilon_{i,c}$ for each training example. The total weight of this instance is 
\[
  \alpha_{i,c} = w_c + \varepsilon_{i,c}.
\]
The instance-wise weights $\varepsilon_{i,c}$ are learned via a meta-learning procedure using a small class-balanced validation set: instances that are more helpful for improving performance on this balanced set receive larger positive $\varepsilon_{i,c}$, while less useful or noisy instances are assigned small weights. 

\paragraph{Overall TCR Loss.}
Combining both components, each instance contributes its loss multiplied by $\alpha_{i,c}$. Let $N = \sum_{c=1}^C n_c$ be the total number of training instances. The TCR reweighted loss can be written as 
\[
  L_{\text{TCR}}(W)
  = \frac{1}{N}
    \sum_{c=1}^C \sum_{i=1}^{n_c}
    \bigl(w_c + \varepsilon_{i,c}\bigr)\,
    \ell\bigl(W,h(x_{i,c}),c\bigr).
\]
When combined with the standard cross-entropy loss
\[
  L_{\text{CE}}(W)
  = \frac{1}{C}\sum_{c=1}^C L_c(W),
  \qquad
  L_c(W) = \frac{1}{n_c}\sum_{i=1}^{n_c}
           \ell\bigl(W,h(x_{i,c}),c\bigr),
\]
the overall training objective becomes
\[
  L_{\text{}}(W)
  = L_{\text{CE}}(W) + \lambda\,L_{\text{TCR}}(W),
\]
where $\lambda>0$ is a balancing coefficient controlling the strength of the reweighting term.
%%%%%%%%%%%%%%%%%%%%%%%%%%%%%%%%%%%%%%%%%%%%%%%%%%%%%%%%%%%%%%%%%%%%%%%%%%%%%%%
%%%%%%%%%%%%%%%%%%%%%%%%%%%%%%%%%%%%%%%%%%%%%%%%%%%%%%%%%%%%%%%%%%%%%%%%%%%%%%%

\section{Proofs}
\subsection{Proof For Theorem~\ref{equal-loss}}\label{proof for 3.1}
\begin{proof}
Following \cite{Neural_Collapse}, under NC1-NC3 and optimal linear decoding, the last-layer classifier can be chosen with zero bias(b=0). Hence, we omit the bias term in the following analysis.  Let \(\ell\) be any label-symmetric per-sample loss that follows the previous notation. 
For each class \(c\in C\), we have \(\mu_c=\mu_G+\hat\mu_{c}\), where \(\mu_G\) is a constant. Thus, we can equivalently do the derivation in the feature space of decentralized systems.
Since NC3, the centered class means are aligned with the classifier weights. For \(\alpha>0\), we have 
    \begin{align*}
        w_k=\alpha \hat{\mu_k}, \quad k=1,...,C
    \end{align*}
For \(\hat\mu_c\), the logits are :
\begin{align*}
    z_k = \langle w_k, \hat{\mu}_c \rangle
    = \alpha \langle \hat{\mu}_k, \hat{\mu}_c \rangle .
\end{align*}
Since NC2, the centered feature means will form a Simplex ETF. Hence, there exists \(r>0\) that:
\begin{align*}
    \langle \hat{\mu}_c, \hat{\mu}_c \rangle = r^{2}, \qquad
\langle \hat{\mu}_k, \hat{\mu}_c \rangle = -\frac{r^{2}}{C-1}, \quad \forall k \neq c .
\end{align*}
Therefore:
\begin{align*}
    z_c = \alpha r^{2}, \qquad
z_k = -\alpha \frac{r^{2}}{C-1}, \quad k \neq c .
\end{align*}
Therefore, for every class 
c, the logit vector z for a (collapsed) sample of class 
c has the same pattern. Thus, we have:
\begin{align*}
        L_c(W) &= \frac{1}{n_c} \sum_{i=1}^{n_c} \ell\!\left(W, h(x_{i,c}), c\right)\\
        &=\frac{1}{n_c} \sum_{i=1}^{n_c} \psi(z(x_{i,c}),c) \ \text{is independent of c}
\end{align*}
which implies
    \begin{equation*}
        L_1(W) = L_2(W) = \cdots = L_C(W).
    \end{equation*}
This completes the proof.
\end{proof}

\subsection{Proof For Theorem~\ref{thm:imbalance-no-etf}}\label{proof for 3.2}
\begin{proof}
\

\textbf{Continuity:}
For fixed data, each class-wise loss $L_c(W)$ is a continuous function
of $W$. Hence, the mean loss
$\bar L(W) = \frac{1}{C}\sum_{c=1}^C L_c(W)$ is continuous in $W$, and
the numerator of $\rho(W)$ in Eq.~\eqref{eq:loss-imbalance} is obtained
from $\{L_c(W)\}$ by algebraic operations and a square root, which are
continuous on their domain. Therefore, on the set
$\{W : \bar L(W) > 0\}$, the map $W \mapsto \rho(W)$ is continuous.\\

\textbf{ETF Solutions Lie in the Zero-Imbalance Set:}
We define the set of ETF solutions in the neural collapse sense as follows
\begin{align*}
    \mathcal{S}_{\mathrm{ETF}}
    := \{\, W : W \text{ satisfies NC1-NC3} \,\}
\end{align*}
 By Theorem~\ref{equal-loss},
every $W \in \mathcal{S}_{\mathrm{ETF}}$ has equal class-wise losses,
$L_1(W) = \dots = L_C(W)$, and thus $\rho(W) = 0$. In particular,
\begin{align*}
    \mathcal{S}_{\mathrm{ETF}}
    \subseteq \{\, W : \rho(W) = 0 \,\}.
    \label{eq:etf-subset-zero-rho}
\end{align*}

\medskip
\textbf{Persistent Imbalance Propagates to Limit Points.}
Consider any convergent subsequence $\{W_{t_k}\}_{k\ge 1}$ of $\{W_t\}$
with $t_k \to \infty$ and
\begin{align*}
    W_{t_k} \xrightarrow[k\to\infty]{} W_\star.
\end{align*}
By the persistent imbalance condition where\(\rho(W_t) \;\ge\; \varepsilon \). 
For all sufficiently large $k$ we have $t_k \ge T$ and therefore
\begin{align*}
    \rho(W_{t_k}) \;\ge\; \varepsilon.
\end{align*}
By continuity of $\rho(\cdot)$ 
\begin{align*}
    \rho(W_\star)
    = \lim_{k\to\infty} \rho(W_{t_k})
    \;\ge\; \varepsilon \;>\; 0.
\end{align*}
Hence, every limit point $W_\star$ of the training dynamics lies in the
set
\begin{align*}
    \{\, W : \rho(W) \ge \varepsilon \,\}.
\end{align*}

\medskip
\textbf{Separation from the ETF Solution Set.}
Combining Step~2 and Step~3, we see that ETF-solutions lie in the set
$\{W:\rho(W)=0\}$, while limit points under
the imbalance loss assumption lie in the set
$\{W:\rho(W)\ge\varepsilon\}$ with $\varepsilon>0$. These two sets are
disjoint, so no limit point $W_\star$ of $\{W_t\}$ can belong to
$\mathcal{S}_{\mathrm{ETF}}$.
\end{proof}

\subsection{Proof For Theorem~\ref{closed-form}} \label{proof for 4.1}
\begin{proof}
Denote $L_c = L_c(W)$ and $\bar L = \bar L(W)$ and define
\[
    \phi_c(w_c)
    := \big(w_c L_c - \bar L\big)^2
       + \alpha \big(w_c - w_c^{(0)}\big)^2 .
\]
This is a quadratic function in $w_c$ with a leading coefficient
$L_c^2 + \alpha > 0$, hence $\phi_c$ is strictly convex and has a unique
global minimizer characterized by the first-order optimality condition
$\frac{d}{dw_c} \phi_c(w_c) = 0$.

Differentiating and setting the derivative to zero gives
\[
    2 \big(w_c L_c - \bar L\big) L_c
    + 2 \alpha \big(w_c - w_c^{(0)}\big)
    = 0,
\]
which simplifies to
\[
    (L_c^2 + \alpha)\, w_c
    = \bar L\, L_c + \alpha\, w_c^{(0)}.
\]
Solving for $w_c$ yields
\[
    w_c^\star
    = \frac{\bar L\,L_c + \alpha\,w_c^{(0)}}{L_c^2 + \alpha},
\]
which is exactly \eqref{eq:wc-closed-form}.  
Therefore, $w_c^\star(W)$ is the unique minimizer of  \eqref{class-inverse},
completing the proof.
\end{proof}

\section{MiLe-LR: Mittag–Leffler Learning-Rate Scheduling for Long-Tailed Learning}
In long-tailed learning, head classes provide dense and consistent gradients while tail classes provide sparse gradient signals. Thus, long-tailed learning is actually a multi-timescale problem where tail classes are typically fitted at a later stage of training. Since the learning rate controls the training dynamics, we thus seek a schedule that decays reasonably fast while retaining non-negligible late-stage updates to preserve a long effective plasticity horizon for tail classes. Thus, we parameterize the learning rate decay via the Mittag-Leffler (ML) function and introduce a two-stage switch to align with the multi-timescale dynamic of long-tailed learning. 

\paragraph{Mittage-Leffer Function}
The two-parameter ML function is
\begin{equation}
E_{a,b}(x)
=
\sum_{k=0}^{\infty}
\frac{x^{k}}{\Gamma(a k + b)},
\qquad
a > 0,\ b > 0 .
\end{equation}

and we use the one-parameter case $(b = 1)$:
\begin{equation}
E_{a}(x)
\triangleq
E_{a,1}(x)
=
\sum_{k=0}^{\infty}
\frac{x^{k}}{\Gamma(a k + 1)} .
\end{equation}

A key property on the negative real axis is the asymptotic form (for $z>0$ and large $z$)
\begin{equation}
E_{a}(-z) \;\approx\; \frac{1}{z\,\Gamma(1-a)},
\label{eq:ml_tail}
\end{equation}
which decays more slowly than exponential schedules.
These slow decay intermittent tail-class gradients can still induce meaningful parameter updates.

\paragraph{Imbalance-Adaptive Tail Strength.}
We adapt the tail strength via $a$ using the normalized entropy of class counts $\{n_c\}_{c=1}^C$.
Let $p_c = n_c / \sum_{j} n_j$ and
\begin{equation}
H_{\mathrm{norm}} \;=\; \frac{-\sum_{c=1}^{C} p_c \log p_c}{\log C},
\end{equation}
then we set
\begin{equation}
a \;=\; 0.25 + 0.75\,H_{\mathrm{norm}}.
\label{eq:alpha_entropy}
\end{equation}
More severe imbalance yields lower entropy and thus smaller $\alpha$, producing heavier tails and stronger late-stage driving for tail fitting.

\paragraph{Scheduler Definition.}
MiLeLR is an \emph{iteration-level} scheduler.
Let $\eta_0$ be the base LR and $\eta_t$ the LR at global iteration $t$.
We can apply an optional linear warm-up mechanism for the first $T_w$ iterations:
\begin{equation}
\eta_t \;=\; \eta_0 \cdot \frac{t+1}{T_w}, \qquad t < T_w.
\label{eq:warmup}
\end{equation}
After warm-up, define the post-warmup index $\tau = t - T_w$, the post-warmup horizon $T = T_{\mathrm{all}} - T_w$, and the switch point $T_s$ (in iterations).
We set
\begin{equation}
\eta_t \;=\; \eta_0 \cdot f(\tau),
\end{equation}
where $f(\tau)$ is defined by the following two stages.

\paragraph{Stage I (Early Stabilization).}
For $\tau < T_s$, we keep the ML argument in the series-stable regime:
\begin{equation}
p = \frac{\tau}{\max(T_s,1)}, \qquad
z_1 = (1-\varepsilon)p, \qquad
f(\tau) = E_{\alpha}(-z_1).
\label{eq:stage1}
\end{equation}

\paragraph{Stage II (Late Power-Law Driving).}
For $\tau \ge T_s$, we enter the power-law regime to preserve non-vanishing late updates:
\begin{equation}
\tau_2 = \tau - T_s, \qquad
T_2 = \max(T - T_s, 1), \qquad
s_2 = \min\!\left(\frac{\tau_2}{T_2},\,1-\varepsilon\right),
\end{equation}
\begin{equation}
z_2 = 1 + \frac{s_2}{1 - s_2 + \varepsilon}, \qquad
f(\tau) \;\approx\; \frac{1}{z_2\,\Gamma(1-\alpha)}.
\label{eq:stage2}
\end{equation}
In practice, we compute $E_{\alpha}(-z)$ with a stable piecewise approximation: a truncated series for $z<1$ and the asymptotic tail in Eq.~\eqref{eq:ml_tail} for $z\ge 1$, which also motivates the explicit tail form used in Stage II.

\paragraph{Switch Steps in Practice.}
Let \texttt{iters\_per\_epoch} denote the number of iterations per epoch.
We set $T_{\mathrm{all}}=\texttt{epochs}\times\texttt{iters\_per\_epoch}$ and $T_w=\texttt{warmup\_epochs}\times\texttt{iters\_per\_epoch}$.
Given a \texttt{lr\_switch\_epoch}, we convert it into iterations and subtract the warm-up to obtain
\begin{equation}
T_s \;=\; \max\!\left(\left\lfloor \texttt{lr\_switch\_epoch}\cdot \texttt{iters\_per\_epoch}\right\rfloor - T_w,\,0\right).
\label{eq:switch_steps}
\end{equation}

\section{Algorithm}
The complete algorithm is shown in the Algorithm~\ref{algorithm}.
{\setlength{\algomargin}{0.2em}
\SetAlgoNlRelativeSize{-1}
\begin{algorithm}[t] 
\caption{Batch-wise Inverse Reweighting with Macro-level Compensation (Closed-form)}
\label{algorithm}
\KwIn{
Training set $\mathcal{D}$;
base loss $\ell_{\text{base}}$;
prior class weights $\{w_c^{(0)}\}_{c=1}^C$;
regularization coefficient $\alpha$;
macro compensation exponent $\gamma \ge 0$;
network parameters $W$
}
{
Initialize batch-appearance counters $B_c \leftarrow 0$ for all classes $c=1,\dots,C$
}

\For{each epoch}{
  \For{each mini-batch $\mathcal{B}=\{(x_i,y_i)\}_{i=1}^m \subset \mathcal{D}$}{
    \tcc{Compute base loss for samples in the batch}
    $\ell_i \leftarrow \ell_{\text{base}}(f(x_i;W), y_i),\quad i=1,\dots,m$\;

    \tcc{Classes appearing in the batch}
    $\mathcal{I}_{\mathcal{B}} \leftarrow \{c \mid \exists i,\ y_i = c\}$\;

    \tcc{(A) Update macro-level batch-frequency statistics}
    \ForEach{$c \in \mathcal{I}_{\mathcal{B}}$}{
      $B_c \leftarrow B_c + 1$\;
    }

    \tcc{(B) Estimate per-class mean loss in the batch}
    \ForEach{$c \in \mathcal{I}_{\mathcal{B}}$}{
      $I_c \leftarrow \{ i \mid y_i = c \}$\;
      $\hat L_c \leftarrow \frac{1}{|I_c|}\sum_{i \in I_c} \ell_i$\;
    }

    \tcc{Batch average of class-mean losses}
    $\hat L_{\text{avg}} \leftarrow
    \frac{1}{|\mathcal{I}_{\mathcal{B}}|}
    \sum_{c \in \mathcal{I}_{\mathcal{B}}} \hat L_c$\;

    \tcc{(C) Closed-form batch-wise inverse reweighting (consistent with Eq.(13))}
    \ForEach{$c \in \mathcal{I}_{\mathcal{B}}$}{
      $w_c^\star \leftarrow
      \dfrac{\hat L_{\text{avg}}\hat L_c + \alpha\, w_c^{(0)}}{\hat L_c^2 + \alpha}$\;
    }

    \tcc{(D) Macro-level compensation (consistent with Eq.(14)--(15))}
    \ForEach{$c \in \mathcal{I}_{\mathcal{B}}$}{
      $\beta_c \leftarrow B_c^{-\gamma}$\;
    }
    $\bar\beta \leftarrow
    \frac{1}{|\mathcal{I}_{\mathcal{B}}|}
    \sum_{c \in \mathcal{I}_{\mathcal{B}}} \beta_c$\;

    \ForEach{$c \in \mathcal{I}_{\mathcal{B}}$}{
      $\hat w_c \leftarrow w_c^\star \cdot \dfrac{\beta_c}{\bar\beta}$\;
    }

    \tcc{Weighted loss and update}
    $\mathcal{L}_{\mathcal{B}} \leftarrow
    \frac{1}{m}\sum_{i=1}^m \hat w_{y_i}\,\ell_i$\;

    Update $W$ using $\nabla_W \mathcal{L}_{\mathcal{B}}$\;
  }
}
\end{algorithm}
}

\section{Sensitive Analysis}
To analyze the sensitivity of hyper-parameters $\alpha$ and $\gamma$ of our proposed method, we conduct a sensitivity analysis on CIFAR-100-LT with the imbalance factor of 100. We report the result with different $\alpha$ and $\gamma$ in Figure~\ref{fig:sensitive_analysis}. As shown in the left panel of Figure~\ref{fig:sensitive_analysis}, we can see that increasing $\gamma$ consistently improves the overall Top-1 accuracy. Here, $\gamma$ controls the strength of our macro-level compensation, which up-weights classes that appear less in mini-batches. This indicates that the macro-level compensation can effectively improve performance. We further study the right panel of Figure~\ref{fig:sensitive_analysis}, where we fix $\gamma=1$ and vary $\alpha$. In our experiments, the prior is set to a uniform weighting ($w^{(0)}=1$). We observe that $\alpha=0$ achieves the best accuracy, and the performance decreases as $\alpha$ increases. This suggests that the uniform prior may not always be beneficial, and a weaker or no prior could gain better results in this setting.

\begin{figure}[t]
  \centering

  \begin{subfigure}[b]{0.48\linewidth}
    \centering
    \includegraphics[width=\linewidth]{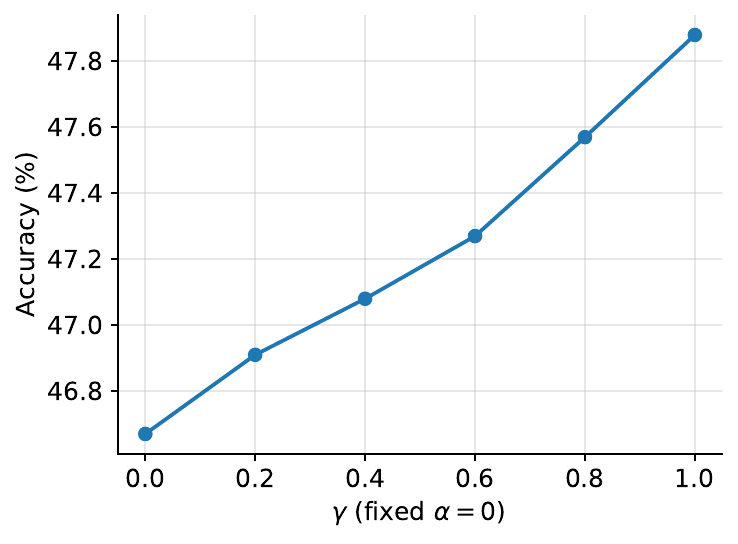}
  \end{subfigure}\hfill
  \begin{subfigure}[b]{0.48\linewidth}
    \centering
    \includegraphics[width=\linewidth]{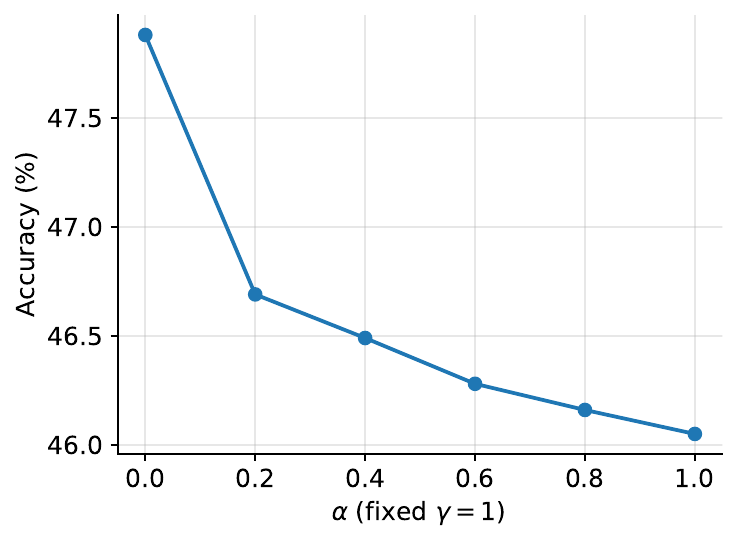}
  \end{subfigure}

  \caption{Sensitive analysis of hyper-parameters. (Left) Different $\gamma$ with fixed $\alpha=0$ to control the macro-level compensation across mini-batches. (Right) Different $\alpha$ with fixed $\gamma=1$ to control the strength of the Tikhonov regularization towards the prior weights $w^{(0)}$ in the inverse reweighting.}
  \label{fig:sensitive_analysis}
\end{figure}

\section{Experiment Details} \label{experiment detail}
We perform our experiments on CIFAR-10-LT, CIFAR-100-LT, iNaturalist, and ImageNet-LT. All experiments are implemented by the SGD optimizer with a momentum of 0.9 and weight decay of 5e-4. We set the batch size as 256 for all datasets. We use a multi-stage learning rate schedule, and the initial learning rate is 0.1. For CIFAR-LT, we use ResNet-32 as the backbone and train for 200 epochs. For iNaturalist and ImageNet-LT, we use ResNet-50 as the backbone and train for 90 epochs. 

For the reweighting component, we adopt a two-stage switching strategy controlled by a switch epoch. We train with the original loss before the switch epoch and apply our reweighting afterwards. For the standard cross-entropy baseline, we keep the hyper-parameter $\alpha$ as 0 and the strength of the macro compensation $\gamma$ as 1.0 under all three different imbalance factors on all datasets. We set $\alpha$ as 0.1 and $\gamma$ as 1.0 when using ETF-DR as a baseline on all datasets. For CMO, the hyper-parameter $\alpha$ and $\gamma$ are 0 and 0.5 on all datasets. We apply our reweighting on GLMC with $\alpha$ as 0 and $\gamma$ as 1 on all datasets.

For the proposed learning-rate schedule, we use a learning-rate switch epoch to switch the mode. We set the switch point at epoch 160 on CIFAR, epoch 60 on iNaturalist, and ImageNet. For the standard cross-entropy baseline, the initial learning rate is 0.45, 0.3,5 and 0.4 with the imbalance factor of 50, 100, 200 on CIFAR-100-LT and 0.35, 0.3, and 0.3 with the imbalance factor of 50, 100, 200 on CIFAR-10-LT. We set the initial learning rate as 0.35 on iNaturalist and ImageNet for the cross-entropy baseline. For the ETF-DR baseline, the initial learning rate is 0.15, 0.25, and 0.2 on CIFAR-10-LT and 0.5, 0.55, and 0.5 on CIFAR-100-LT with the imbalance factor of 50, 100, and 200, respectively. For the CMO baseline, we apply our learning rate schedule by using the initial learning rate as 0.25, 0.2, and 0.15 on CIFAR-10-LT and 0.45, 0.2, and 0.35 on CIFAR-100-LT under the imbalance factor IF=50,100,200. We set the initial learning rate as 0.35 on iNaturalist and ImageNet for the CMO baseline. For GLMC, we set the initial learning rate to 0.15. 

\section{Computational Cost Discussion}
Our method introduces only small runtime and memory overhead, enabled by a closed-form per-class solution for the reweighting coefficients. In each iteration, we directly compute the class-wise weights by a closed-form per-class solution from a simple batch-level class statistics, without training an additional weighting network or performing extra backpropagation for weight updates. Consequently, the additional cost mainly comes from lightweight per-class reductions and vector operations to obtain each class-weighting parameter. Compared with the standard forward and backward pass, this extra computation is typically limited and does not significantly affect training efficiency.
\end{document}